\newtheorem{theorem}{Theorem}
\newtheorem*{theorem*}{Theorem}
\newtheorem{restatement}{Restatement of Theorem}
\newtheorem{lemma}{Lemma}
\newtheorem{definition}{Definition}
\theoremstyle{definition}
\theoremstyle{proposition}
\DeclareMathOperator*{\argmax}{argmax}
\DeclareMathOperator*{\sto}{sto}
\DeclareMathOperator*{\sub}{sub}
\DeclareMathOperator*{\mean}{mean}
\DeclareMathOperator*{\opt}{opt}
\title{Sample-Efficient Deep Reinforcement Learning via Episodic Backward Update}
\author{Su Young Lee, \quad Sungik Choi, \quad Sae-Young Chung\\
School of Electrical Engineering, KAIST, Republic of Korea \\
\texttt{\{suyoung.l, si\_choi, schung\}@kaist.ac.kr} \\
%\And
%Sungik Choi \\
%School of Electrical Engineering \\
%Korea Advanced Institute of Science and Technology \\
%Daejeon, Republic of Korea \\
%\texttt{si\_choi@kaist.ac.kr} \\
%\And
%Sae-Young Chung \\
%School of Electrical Engineering \\
%Korea Advanced Institute of Science and Technology\\
%Daejeon, Republic of Korea \\
%\texttt{schung@kaist.ac.kr} \\
% examples of more authors
% \And
% Coauthor \\
% Affiliation \\
% Address \\
% \texttt{email} \\
% \AND
% Coauthor \\
% Affiliation \\
% Address \\
% \texttt{email} \\
% \And
% Coauthor \\
% Affiliation \\
% Address \\
% \texttt{email} \\
% \And
% Coauthor \\
% Affiliation \\
% Address \\
% \texttt{email} \\
}
\begin{document}

\maketitle

\begin{abstract}
We propose Episodic Backward Update (EBU) – a novel deep reinforcement learning algorithm with a direct value propagation. In contrast to the conventional use of the experience replay with uniform random sampling, our agent samples a whole episode and successively propagates the value of a state to its previous states. Our computationally efficient recursive algorithm allows sparse and delayed rewards to propagate directly through all transitions of the sampled episode. We theoretically prove the convergence of the EBU method and experimentally demonstrate its performance in both deterministic and stochastic environments. Especially in 49 games of Atari 2600 domain, EBU achieves the same mean and median human normalized performance of DQN by using only 5\% and 10\% of samples, respectively.
\end{abstract}

\section{Introduction}
Deep reinforcement learning (DRL) has been successful in many complex environments such as the Arcade Learning Environment \cite{Bellemare2013} and Go \cite{Silver2016}. Despite DRL's impressive achievements, it is still impractical in terms of sample efficiency. To achieve human-level performance in the Arcade Learning Environment, Deep $Q$-Network (DQN) \cite{Mnih2015} requires 200 million frames of experience for training which corresponds to 39 days of gameplay in real-time. Clearly, there is still a tremendous gap between the learning process of humans and that of deep reinforcement learning agents. This problem is even more crucial for tasks such as autonomous driving, where we cannot risk many trials and errors due to the high cost of samples.

One of the reasons why DQN suffers from such low sample efficiency is the sampling method from the replay memory. In many practical problems, an RL agent observes sparse and delayed rewards. There are two main problems when we sample one-step transitions uniformly at random. \textbf{(1)} We have a low chance of sampling a transition with a reward for its sparsity. The transitions with rewards should always be updated to assign credits for actions that maximize the expected return. \textbf{(2)} In the early stages of training when all values are initialized to zero, there is no point in updating values of one-step transitions with zero rewards if the values of future transitions with nonzero rewards have not been updated yet. Without the future reward signals propagated, the sampled transition will always be trained to return a zero value.

In this work, we propose Episodic Backward Update (EBU) to present solutions for the problems raised above. When we observe an event, we scan through our memory and seek for the past event that caused the later one. Such an episodic control method is how humans normally recognize the cause and effect relationship \cite{Lengyel2007}. Inspired by this, we can solve the first problem \textbf{(1)} by sampling transitions in an episodic manner. Then, we can be assured that at least one transition with a non-zero reward is used for the value update. We can solve the second problem \textbf{(2)} by updating the values of transitions in a backward manner in which the transitions were made. Afterward, we can perform an efficient reward propagation without any meaningless updates. This method faithfully follows the principle of dynamic programming.

As mentioned by the authors of DQN, updating correlated samples in a sequence is vulnerable to overestimation. In Section 3, we deal with this issue by adopting a diffusion factor to mediate between the learned values from the future transitions and the current sample reward. In Section 4, we theoretically prove the convergence of our method for both deterministic and stochastic MDPs. In Section 5, we empirically show the superiority of our method on 2D MNIST Maze Environment and the 49 games of Atari 2600 domain. Especially in 49 games of the Atari 2600 domain, our method requires only 10M frames to achieve the same mean human-normalized score reported in Nature DQN \cite{Mnih2015}, and 20M frames to achieve the same median human-normalized score. Remarkably, EBU achieves such improvements with a comparable amount of computation complexity by only modifying the target generation procedure for the value update from the original DQN.

\section{Background}
The goal of reinforcement learning (RL) is to learn the optimal policy that maximizes the expected sum of rewards in the environment that is often modeled as a Markov decision process (MDP) $\uppercase{M}=(\mathcal{S},\mathcal{A},\uppercase{P},\uppercase{R})$. $\mathcal{S}$ denotes the state space, $\mathcal{A}$ denotes the action space, $P:\mathcal{S} \times \mathcal{A} \times \mathcal{S} \rightarrow \mathbb{R}$ denotes the transition probability distribution, and $R: \mathcal{S} \times \mathcal{A} \rightarrow \mathbb{R}$ denotes the reward function. $Q$-learning \cite{Watkins1989} is one of the most widely used methods to solve RL tasks. The objective of $Q$-learning is to estimate the state-action value function $Q(s,a)$, or the $Q$-function, which is characterized by the Bellman optimality equation. $Q^*(s_t,a)=\mathbb{E}[r_t + \gamma \max_{a^\prime} Q^* (s_{t+1},a^\prime)].$

There are two major inefficiencies of the traditional on-line $Q$-learning. First, each experience is used only once to update the $Q$-function. Secondly, learning from experiences in a chronologically forward order is much more inefficient than learning in a chronologically backward order, because the value of $s_{t+1}$ is required to update the value of $s_t$. Experience replay \cite{Lin1992} is proposed to overcome these inefficiencies. After observing a transition $(s_t, a_t, r_t, s_{t+1})$, the agent stores the transition into its replay buffer. In order to learn the $Q$-values, the agent samples transitions from the replay buffer.

In practice, the state space $\mathcal{S}$ is extremely large, therefore it is impractical to tabularize the $Q$-values of all state-action pairs. Deep $Q$-Network \cite{Mnih2015} overcomes this issue by using deep neural networks to approximate the $Q$-function. DQN adopts experience replay to use each transition for multiple updates. Since DQN uses a function approximator, consecutive states output similar $Q$-values. If DQN updates transitions in a chronologically backward order, often overestimation errors cumulate and degrade the performance. Therefore, DQN does not sample transitions in a backward order, but uniformly at random. This process breaks down the correlations between consecutive transitions and reduces the variance of updates.

There have been a variety of methods proposed to improve the performance of DQN in terms of stability, sample efficiency, and runtime. Some methods propose new network architectures. The dueling network architecture \cite{Wang2016} contains two streams of separate $Q$-networks to estimate the value functions and the advantage functions. Neural episodic control \cite{Pritzel2017} and model-free episodic control \cite{Blundell2016} use episodic memory modules to estimate the state-action values. RUDDER \cite{Arjona-Medina2018} introduces an LSTM network with contribution analysis for an efficient return decomposition. Ephemeral Value Adjustments (EVA) \cite{Hansen2018} combines the values of two separate networks, where one is the standard DQN and another is a trajectory-based value network.

Some methods tackle the uniform random sampling replay strategy of DQN. Prioritized experience replay \cite{Schaul2016} assigns non-uniform probability to sample transitions, where greater probability is assigned for transitions with higher temporal difference (TD) error. Inspired by Lin's backward use of replay memory, some methods try to aggregate TD values with Monte-Carlo (MC) returns. $Q(\lambda)$ \cite{Watkins1992}, $Q^*(\lambda)$ \cite{Harutyunyan2016} and Retrace$(\lambda)$ \cite{Munos2016} modify the target values to allow the on-policy samples to be used interchangeably for on-policy and off-policy learning. Count-based exploration method combined with intrinsic motivation \cite{Bellemare2016} takes a mixture of one-step return and MC return to set up the target value. Optimality Tightening \cite{He2017} applies constraints on the target using the values of several neighboring transitions. Simply by adding a few penalty terms to the loss, it efficiently propagates reliable values to achieve fast convergence.

Our goal is to improve the sample efficiency of deep reinforcement learning by making a simple yet effective modification. Without a single change of the network structure, training schemes, and hyperparameters of the original DQN, we only modify the target generation method. Instead of using a limited number of  transitions, our method samples a whole episode from the replay memory and propagates the values sequentially throughout the entire transitions of the sampled episode in a backward manner. By using a temporary backward $Q$-table with a diffusion coefficient, our novel algorithm effectively reduces the errors generated from the consecutive updates of correlated states.

\section{Proposed Methods}

\subsection{Episodic Backward Update for Tabular $Q$-Learning}

\begin{figure}[t]%
\centering
\subfloat[]{{\includegraphics[trim={0cm 0.5cm 0cm 0cm}, width=6.0cm]{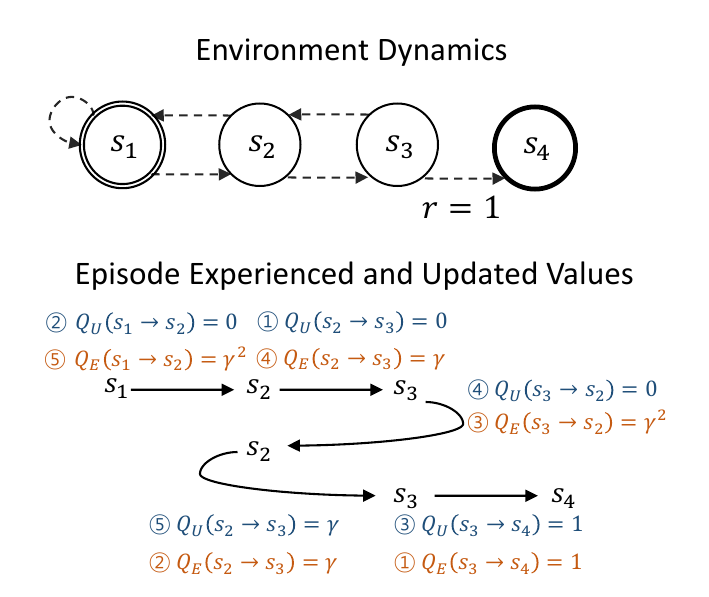} }}\quad
\subfloat[]{{\includegraphics[trim={1cm 0.5cm 0cm 1cm}, width=6.0cm]{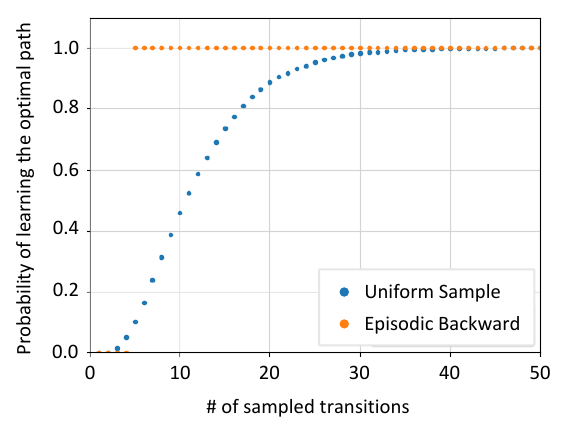} }}
\caption{A motivating example where uniform sampling method fails but EBU does not. (a): A simple navigation domain with 4 states and a single rewarded transition. Circled numbers indicate the order of sample updates. $Q_U$ and $Q_E$ stand for the $Q$-values learned by the uniform random sampling method and the EBU method respectively. (b): The probability of learning the optimal path ($s_1 \rightarrow s_2 \rightarrow s_3 \rightarrow s_4$) after updating the $Q$-values with sample transitions.}
\label{fig:1}
\end{figure}

Let us imagine a simple tabular MDP with a single rewarded transition (Figure~\ref{fig:1}, (a)), where an agent can only take one of the two actions: \emph{`left'} and \emph{`right'}. In this example, $s_1$ is the initial state, and $s_4$ is the terminal state. A reward of 1 is gained only when the agent reaches the terminal state and a reward of 0 is gained from any other transitions. To make it simple, assume that we have only one episode stored in the experience memory: $(s_1\rightarrow s_2\rightarrow s_3\rightarrow s_2\rightarrow s_3\rightarrow s_4)$. The $Q$-values of all transitions are initialized to zero. With a discount $\gamma\in(0,1)$, the optimal policy is to take the action \emph{`right'} in all states. When sampling transitions uniformly at random as Nature DQN, the key transitions $(s_1\rightarrow s_2)$, $(s_2\rightarrow s_3)$ and $(s_3\rightarrow s_4)$ may not be sampled for updates. Even when those transitions are sampled, there is no guarantee that the update of the transition $(s_3\rightarrow s_4)$ is done before the update of $(s_2\rightarrow s_3)$. We can speed up the reward propagation by updating all transitions within the episode in a backward manner. Such a recursive update is also computationally efficient.

We can calculate the probability of learning the optimal path $(s_1\rightarrow s_2\rightarrow s_3\rightarrow s_4)$ as a function of the number of sample transitions trained. With the tabular Episodic Backward Update stated in Algorithm 1, which is a special case of Lin's algorithm \cite{Lin1991} with recency parameter $\lambda=0$, the agent can figure out the optimal policy just after 5 updates of $Q$-values. However, we see that the uniform sampling method requires more than 40 transitions to learn the optimal path with probability close to 1 (Figure~\ref{fig:1}, (b)).

Note that this method differs from the standard $n$-step $Q$-learning \cite{Watkins1989}. In $n$-step $Q$-learning, the number of future steps for the target generation is fixed as $n$. However, our method considers $T$ future values, where $T$ is the length of the sampled episode. $N$-step $Q$-learning takes a $\max$ operator at the $n$-th step only, whereas our method takes a $\max$ operator at every iterative backward step which can propagate high values faster. To avoid exponential decay of the $Q$-value, we set the learning rate $\alpha=1$ within the single episode update.

There are some other multi-step methods that converge to the optimal state-action value function, such as $Q(\lambda)$ and $Q^*(\lambda)$. However, our algorithm neither cuts trace of trajectories as $Q(\lambda)$, nor requires the parameter $\lambda$ to be small enough to guarantee convergence as $Q^*(\lambda)$. We present a detailed discussion on the relationship between EBU and other multi-step methods in Appendix F.

\begin{algorithm}[t]
\small
\caption{Episodic Backward Update for Tabular $Q$-Learning (single episode, tabular)}
\label{alg:1}
\begin{algorithmic}[1]
\State Initialize the $Q$- table $Q \in \mathbb{R}^{\mathcal{S} \times \mathcal{A}}$ with all-zero matrix. \newline $Q(s,a) = 0$ for all state action pairs $(s,a) \in \mathcal{S} \times \mathcal{A}$.
\State Experience an episode $ E = \{(s_1,a_1,r_1,s_2), \ldots, (s_{T},a_{T},r_{T},s_{T+1})\}$
\For {$t = T$ to 1}
\State $Q(s_t,a_t) \leftarrow r_t + \gamma \max_{a^\prime} Q(s_{t+1},a')$

\EndFor

\end{algorithmic}
\end{algorithm}

\subsection[title]{Episodic Backward Update for Deep $Q$-Learning\footnote{The code is available at \url{https://github.com/suyoung-lee/Episodic-Backward-Update}}} 

Directly applying the backward update algorithm to deep reinforcement learning is known to show highly unstable results due to the high correlation of consecutive samples. We show that the fundamental ideas of the tabular version of the backward update algorithm may be applied to its deep version with just a few modifications. The full algorithm introduced in Algorithm 2 closely resembles that of Nature DQN \cite{Mnih2015}. Our contributions lie in the recursive backward target generation with a diffusion factor $\beta$ (starting from line number 7 of Algorithm 2), which prevents the overestimation errors from correlated states cumulating.

Instead of sampling transitions uniformly at random, we make use of all transitions within the sampled episode $E = \{\bm{S,A,R,S^\prime}\}$. Let the sampled episode start with a state $S_1$, and contain T transitions. Then $E$ can be denoted as a set of four length-$T$ vectors: $ \bm{S}=\{S_1, S_2, \ldots, S_T\}$; $\bm{A}=\{A_1,A_2,\ldots, A_T\}$; $\bm{R}=\{R_1,R_2,\ldots, R_T\}$ and $\bm{S^\prime}=\{S_2, S_3,\ldots, S_{T+1}\}$. The temporary target $Q$-table, $\tilde{Q}$ is an $|\mathcal{A}|\times T$ matrix which stores the target $Q$-values of all states $\bm{S^\prime}$ for all valid actions, where $\mathcal{A}$ is the action space of the MDP. Therefore, the $j$-th column of $\tilde{Q}$ is a column vector that contains $\hat{Q}\left(S_{j+1},a;\bm{\theta^-}\right)$ for all valid actions $a\in\mathcal{A}$, where $\hat{Q}$ is the target $Q$-function parametrized by $\bm{\theta^-}$.

\begin{algorithm}[t]
\small
\caption{Episodic Backward Update}
\label{alg:2}
\begin{algorithmic}[1]
\State \textbf{Initialize}: replay memory $D$ to capacity $N$, on-line action-value function $Q(\cdot; \bm{\theta})$, target action-value function $\hat{Q}(\cdot; \bm{\theta^-})$
\For {episode = 1 to $M$ }
\For {$t = 1$ to \mbox{Terminal} }

\State With probability $\epsilon$ select a random action $a_t$, otherwise select $a_t = \argmax_a Q\left(s_t,a;\bm{\theta}\right)$
\State Execute action $a_t$, observe reward $r_t$ and next state $s_{t+1}$
\State Store transition $(s_t,a_t,r_t,s_{t+1})$ in $D$

\State Sample a random episode $E = \{\bm{S,A,R,S^\prime}\}$ from $D$, set $T=\operatorname{length}(E)$

\State Generate a temporary target $Q$-table, $\tilde{Q} = \hat{Q}\left(\bm{S^\prime},\bm{\cdot};\bm{\theta^-}\right)$
\State Initialize the target vector $\bm{y} = \operatorname{zeros}(T)$, $\bm{y}_T \leftarrow \bm{R}_T$
\For {$k = T-1$ to 1}
\State $\tilde{Q}\left[\bm{A}_{k+1}, k\right] \leftarrow \beta \bm{y}_{k+1} + (1-\beta) \tilde{Q}\left[\bm{A}_{k+1}, k\right]$
\State $\bm{y}_k \leftarrow \bm{R}_k + \gamma\max_{a} \tilde{Q}\left[a, k\right] $
\EndFor

\State Perform a gradient descent step on $\left(\bm{y}-Q\left(\bm{S},\bm{A};\bm{\theta}\right)\right)^2$ with respect to $\bm{\theta}$
\State Every $C$ steps reset $\hat{Q} = Q$
\normalsize
\EndFor
\EndFor
\end{algorithmic}
\end{algorithm}
After the initialization of the temporary $Q$-table, we perform a recursive backward update. Adopting the backward update idea, one element $\tilde{Q}\left[\bm{A}_{k+1},k\right]$ in the $k$-th column of the $\tilde{Q}$ is replaced using the next transition's target $\bm{y}_{k+1}$. Then $\bm{y}_k$ is estimated as the maximum value of the newly modified $k$-th column of $\tilde{Q}$. Repeating this procedure in a recursive manner until the start of the episode, we can successfully apply the backward update algorithm for a deep $Q$-network. The process is described in detail with a supplementary diagram in Appendix E.

We are using a function approximator, and updating correlated states in a sequence. As a result, we observe overestimated values propagating and compounding through the recursive $\max$ operations. We solve this problem by introducing the diffusion factor $\beta$. By setting $\beta \in (0,1)$, we can take a weighted sum of the new backpropagated value and the pre-existing value estimate. One can regard $\beta$ as a learning rate for the temporary $Q$-table, or as a level of \emph{`backwardness'} of the update. This process stabilizes the learning process by exponentially decreasing the overestimation error. Note that Algorithm 2 with $\beta=1$ is identical to the tabular backward algorithm stated in Algorithm 1. When $\beta=0$, the algorithm is identical to episodic one-step DQN. The role of $\beta$ is investigated in detail with experiments in Section 5.3.

\subsection{Adaptive Episodic Backward Update for Deep $Q$-Learning}
The optimal diffusion factor $\beta$ varies depending on the type of the environment and the degree of how much the network is trained. We may further improve EBU by developing an adaptive tuning scheme for $\beta$. Without increasing the sample complexity, we propose an adaptive, single actor and multiple learner version of EBU. We generate $K$ learner networks with different diffusion factors, and a single actor to output a policy. For each episode, the single actor selects one of the learner networks in a regular sequence. Each learner is trained in parallel, using the same episode sampled from a shared experience replay. Even with the same training data, all learners show different interpretations of the sample based on the different levels of trust in backwardly propagated values. We record the episode scores of each learner during training. After every fixed step, we synchronize all the learner networks with the parameters of a learner network with the best training score. This adaptive version of EBU is presented as a pseudo-code in Appendix A. In Section 5.2, we compare the two versions of EBU, one with a constant $\beta$ and another with an adaptive $\beta$.

\section{Theoretical Convergence}
\subsection{Deterministic MDPs}
We prove that Episodic Backward Update with $\beta \in (0,1)$ defines a contraction operator, and converges to the optimal $Q$-function in finite and deterministic MDPs.

\begin{theorem}
\label{thm:1}
Given a finite, deterministic and tabular MDP $\uppercase{M}=(\mathcal{S},\mathcal{A},\uppercase{P},\uppercase{R})$, the Episodic Backward Update algorithm in Algorithm~\ref{alg:2} converges to the optimal $Q$-function $w.p.$ 1 as long as

$\bullet$ The step size satisfies the Robbins-Monro condition;

$\bullet$ The sample trajectories are finite in lengths $l$: $\mathbb{E}[ l] < \infty;$

$\bullet$ Every (state, action) pair is visited infinitely often.
\end{theorem}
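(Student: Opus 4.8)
The plan is to show that the episodic backward update defines a stochastic approximation scheme whose expected operator is a contraction in the sup-norm with fixed point $Q^*$, and then invoke the standard convergence theorem for stochastic approximation (e.g. the result underlying the convergence of Watkins' $Q$-learning) using the Robbins-Monro step-size condition and the infinite-visitation hypothesis. First I would set up notation: fix an episode of length $T$ sampled from the replay buffer, and write out the target $\bm{y}_k$ produced by lines 14--16 of Algorithm~\ref{alg:2} as an explicit function of the current target network parameters $\bm{\theta}^-$, i.e. of the $Q$-table $\hat Q$. The key observation is that, because the MDP is deterministic, each transition $(s_k,a_k,r_k,s_{k+1})$ in the episode has $r_k = R(s_k,a_k)$ and $s_{k+1} = P(s_k,a_k)$ fixed, so the only randomness in $\bm{y}_k$ comes from which episode is sampled. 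I would then define the backward operator $\mathcal{H}_\beta$ acting on $Q$-tables: for a given episode, $(\mathcal{H}_\beta Q)(s_k,a_k) = \bm{y}_k$ computed from $Q$, and $(\mathcal{H}_\beta Q)(s,a)=Q(s,a)$ for pairs not touched by the episode.

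The heart of the argument is the contraction estimate. I would prove that for any two $Q$-tables $Q_1,Q_2$ and any sampled episode, $\|\mathcal{H}_\beta Q_1 - \mathcal{H}_\beta Q_2\|_\infty \le \gamma \|Q_1 - Q_2\|_\infty$, by backward induction on $k$ from $T$ down to $1$. The base case $k=T$ is trivial since $\bm{y}_T = R_T$ does not depend on $Q$. For the inductive step, note that $\bm{y}_k = R_k + \gamma \max_a \tilde Q[a,k]$, where the modified column satisfies $\tilde Q[A_{k+1},k] = \beta \bm{y}_{k+1} + (1-\beta)\hat Q(s_{k+1},A_{k+1})$ and all other entries equal $\hat Q(s_{k+1},\cdot)$; using the non-expansiveness of $\max$, the fact that $\beta\in(0,1)$ makes the modified entry a convex combination, and the inductive bound on $|\bm{y}_{k+1}(Q_1)-\bm{y}_{k+1}(Q_2)|$, one gets that the per-column perturbation is at most $\|Q_1-Q_2\|_\infty$, hence $|\bm{y}_k(Q_1)-\bm{y}_k(Q_2)|\le\gamma\|Q_1-Q_2\|_\infty$. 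I would also check that $Q^*$ is a fixed point of $\mathcal{H}_\beta$ in expectation: when $Q=Q^*$, the Bellman optimality equation gives $\hat Q(s_{k+1},A_{k+1}) = Q^*(s_{k+1},A_{k+1})$ and, crucially, $\bm{y}_{k+1} = Q^*(s_{k+1},A_{k+1})$ as well once the induction reaches that point, so the convex combination leaves the entry unchanged and $\bm{y}_k = R_k + \gamma\max_a Q^*(s_{k+1},a) = Q^*(s_k,a_k)$. Thus $\mathcal{H}_\beta Q^* = Q^*$ regardless of $\beta$.

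With the contraction and fixed-point facts in hand, the update in line 18 is (in the tabular limit, treating the gradient step as a tabular relaxation toward $\bm{y}$) exactly an asynchronous stochastic approximation iteration $Q_{t+1}(s,a) = (1-\alpha_t(s,a))Q_t(s,a) + \alpha_t(s,a)(\mathcal{H}_\beta Q_t)(s,a)$ with bounded-variance noise (bounded because rewards are bounded and episode lengths have finite expectation, so $\bm{y}_k$ is bounded). I would then cite the standard theorem (Jaakkola-Jordan-Singh / Tsitsiklis) which states that such an iteration converges w.p. 1 to the fixed point of the expected operator provided every component is updated infinitely often and the step sizes obey Robbins-Monro; the three bullet hypotheses in the theorem statement are precisely what these conditions require, with $\mathbb{E}[l]<\infty$ guaranteeing the noise term has finite variance and that each episode-update terminates.

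The main obstacle I anticipate is the contraction step in the presence of the diffusion coefficient: one must verify that the recursive $\max$-and-mix operation does not amplify differences beyond a factor $\gamma$, and in particular that replacing a single column entry by $\beta\bm{y}_{k+1}+(1-\beta)\hat Q$ rather than by $\bm{y}_{k+1}$ (as pure backward update would) still yields a $\gamma$-contraction and still fixes $Q^*$ — the second point is subtle because $\bm{y}_{k+1}$ and $\hat Q(s_{k+1},A_{k+1})$ need not agree mid-iteration, so the convex-combination argument has to be threaded carefully through the backward induction. A secondary subtlety is the usual gap between the function-approximation gradient step and the idealized tabular update; as in the original DQN convergence discussions this is handled by restricting to the tabular MDP setting, which the theorem hypothesis already does.
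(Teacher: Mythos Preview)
Your proposal is correct and follows the same high-level architecture as the paper: show the backward-update target defines a $\gamma$-contraction in sup-norm with fixed point $Q^*$, then invoke the standard stochastic-approximation lemma (the paper cites Melo's version of Jaakkola--Jordan--Singh) under Robbins--Monro step sizes and infinite visitation.

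Where you differ is in how you establish the contraction and the fixed point. The paper introduces a fair amount of machinery: a \emph{path set} enumerating all continuations from each $(s,a)$, a \emph{schedule} giving weights over paths, and an explicit unrolled return $T^{\beta,Q}_{(p_{s,a})_i}(j)$ indexed by how many steps of the path are used. It then proves contraction by bounding each $T(j)$ via a geometric-series computation, and proves $Q^*$ is the fixed point by an upper-bound calculation on $T^{\beta,Q^*}(j)$ followed by a case-by-case contradiction argument (separately for finite and infinite paths). Your per-episode backward induction is considerably more direct: the contraction step is immediate from the convex-combination structure plus non-expansiveness of $\max$, and the fixed-point step is a clean induction showing $\bm y_k = Q^*(s_k,A_k)$ whenever $\hat Q = Q^*$. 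In particular, the ``subtlety'' you flag---that $\bm y_{k+1}$ and $\hat Q(s_{k+1},A_{k+1})$ might disagree---actually dissolves at $Q^*$: the inductive hypothesis gives $\bm y_{k+1}=Q^*(s_{k+1},A_{k+1})$ exactly, so the convex combination is trivial. Your per-episode statement is in fact slightly stronger than the paper's (every sample operator, not just the schedule-averaged one, is a contraction fixing $Q^*$), which then implies the averaged version the paper uses. What the paper's formulation buys is an explicit closed-form for the target that makes the dependence on $\beta$ and path length visible; what your approach buys is brevity and transparency.
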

We state the proof of Theorem~\ref{thm:1} in Appendix G. Furthermore, even in stochastic environments, we can guarantee the convergence of the episodic backward algorithm for a sufficiently small $\beta$.

\subsection{Stochastic MDPs}
\begin{theorem}
\label{thm:2}
Given a finite, tabular and stochastic MDP $\uppercase{M}=(\mathcal{S},\mathcal{A},\uppercase{P},\uppercase{R})$, define $R_{\max}^{\sto}(s,a)$ as the maximal return of trajectory that starts from state $s \in \mathcal{S}$ and action $a \in \mathcal{A}$. In a similar way, define $r_{\min}^{\sto}(s,a)$ and $r_{\mean}^{\sto}(s,a)$ as the minimum and mean of possible reward by selecting action $a$ in state $s$. Define $\mathcal{A}_{\sub}(s)= \{a' \in \mathcal{A}|Q^{*}(s,a')<\max_{a \in \mathcal{A}}Q^{*}(s,a)\}$ as the set of suboptimal actions in state $s\in \mathcal{S}$. Define $\mathcal{A}_{\opt}(s)= \mathcal{A}\textbackslash \mathcal{A}_{\sub}(s)$. Then, under the conditions of Theorem 1, and
\begin{equation}
\beta \leq \inf_{s \in \mathcal{S}} \inf_{a' \in \mathcal{A}_{\sub}(s)} \inf_{a \in \mathcal{A}_{\opt}(s)} \frac{Q^{*}(s,a)-Q^{*}(s,a')}{R_{\max}^{\sto}(s,a')-Q^{*}(s,a')},
\end{equation}
\begin{equation}
\beta \leq \inf_{s \in \mathcal{S}} \inf_{a' \in \mathcal{A}_{\sub}(s)} \inf_{a \in \mathcal{A}_{\opt}(s)} \frac{Q^{*}(s,a)-Q^{*}(s,a')}{r_{\mean}^{\sto}(s,a)-r_{\min}^{\sto}(s,a)},
\end{equation}

the Episodic Backward Update algorithm in Algorithm~\ref{alg:2} converges to the optimal $Q$-function $w.p.$ 1.
\end{theorem}

The main intuition of this theorem is that $\beta$ acts as a learning rate of the backward target therefore mitigates the collision between the $\max$ operator and stochastic transitions.

\section{Experimental Results}

\subsection{2D MNIST Maze (Deterministic/Stochastic MDPs)}

\begin{figure}[t]%
\centering
\subfloat[]{{\includegraphics[trim={0.3cm 0cm 0cm 0cm}, width=4.65cm]{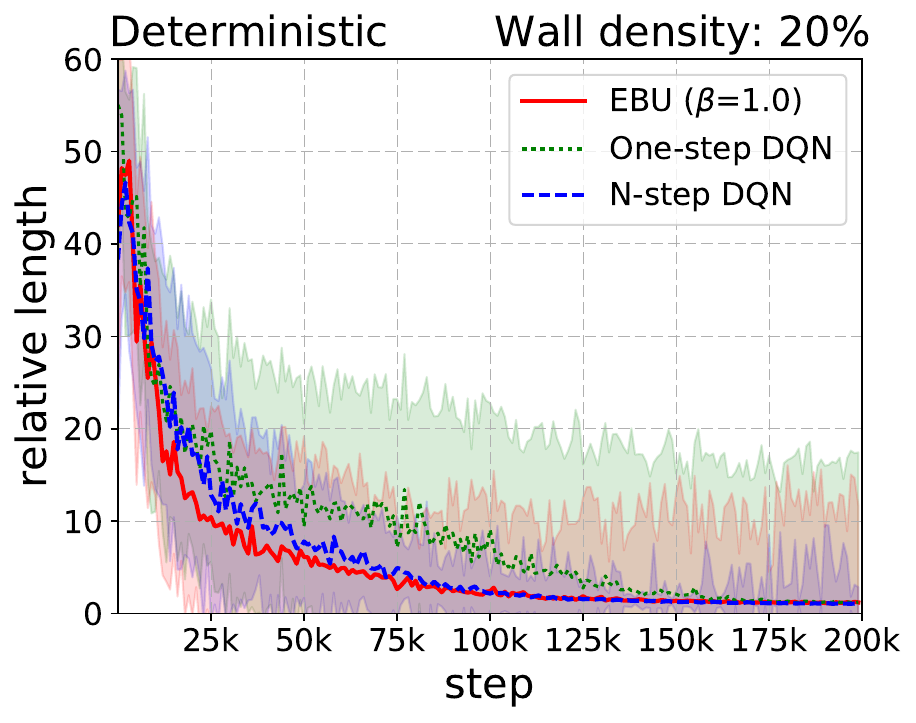} }}
\subfloat[]{{\includegraphics[trim={0.3cm 0cm 0cm 0cm}, width=4.65cm]{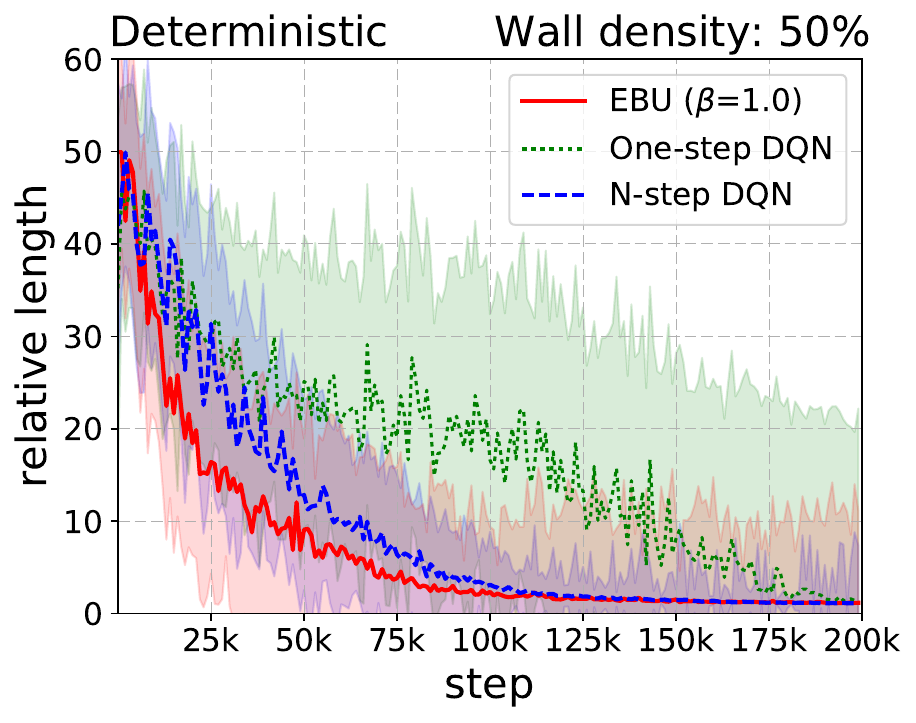} }}
\subfloat[]{{\includegraphics[trim={0.3cm 0cm 0cm 0cm}, width=4.65cm]{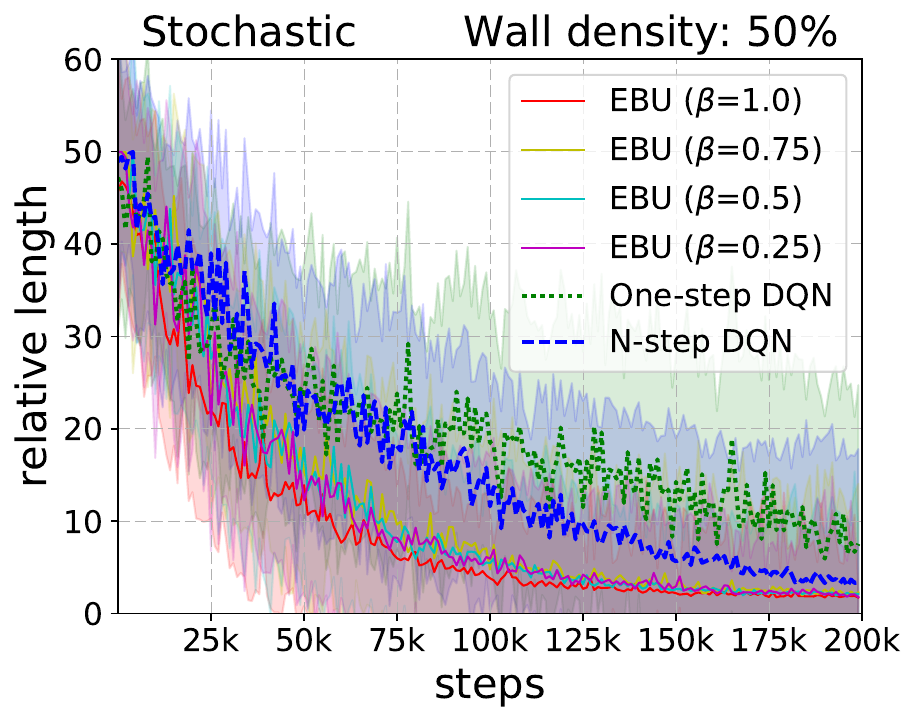} }}

\caption{(a) \& (b): Median of 50 relative lengths of EBU and baselines. EBU outperforms other baselines significantly in the low sample regime and for high wall density. (c): Median relative lengths of EBU and other baseline algorithms in MNIST maze with stochastic transitions.}\label{fig:2}
\end{figure}

\begin{wrapfigure}{r}{3.0cm}
\vspace{-65pt}
\includegraphics[trim={0.0cm 0.6cm 0.0cm 0.6cm}, width=3.0cm]{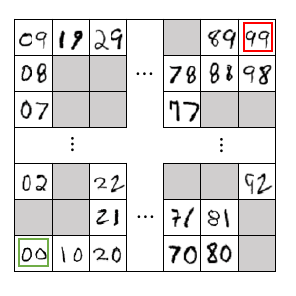}
\caption{2D MNIST Maze}\label{wrap-fig:1}
\vspace{-20pt}
\end{wrapfigure}

We test our algorithm in the 2D Maze Environment. Starting from the initial position at $(0,0)$, the agent has to navigate through the maze to reach the goal position at $(9,9)$. To minimize the correlation between neighboring states, we use the MNIST dataset \cite{LeCun1998} for the state representation. The agent receives the coordinates of the position in two MNIST images as the state representation. The training environments are 10 by 10 mazes with randomly placed walls. We assign a reward of 1000 for reaching the goal, and a reward of -1 for bumping into a wall. A wall density indicates the probability of having a wall at each position. For each wall density, we generate 50 random mazes with different wall locations. We train a total of 50 independent agents, one for each maze over 200,000 steps. The performance metric, relative length is defined as $l_{\operatorname{rel}} = l_{\operatorname{agent}}/l_{\operatorname{oracle}}$, which is the ratio between the length of the agent's path $l_{\operatorname{agent}}$ and the length of the ground truth shortest path $l_{\operatorname{oracle}}$ to reach the goal. The details of the hyperparameters and the network structure are described in Appendix D.

We compare EBU to uniform random sampling one-step DQN and $n$-step DQN. For $n$-step DQN, we set the value of $n$ as the length of the episode. Since all three algorithms eventually achieve median relative lengths of 1 at the end of the training, we report the relative lengths at 100,000 steps in Table~\ref{table:1}. One-step DQN performs the worst in all configurations, implying the inefficiency of uniform sampling update in environments with sparse and delayed rewards. As the wall density increases, it becomes more important for the agent to learn the correct decisions at bottleneck positions. $N$-step DQN shows the best performance with a low wall density, but as the wall density increases, EBU significantly outperforms $n$-step DQN.

In addition, we run experiments with stochastic transitions. We assign 10\% probability for each side action for all four valid actions. For example, when an agent takes an action `\emph{up}', there is a 10\% chance of transiting to the left state, and 10\% chance of transiting to the right state. In Figure~\ref{fig:2} (c), we see that the EBU agent outperforms the baselines in the stochastic environment as well.

\begin{table}[h]
\small
\caption{Relative lengths (Mean \& Median) of 50 deterministic MNIST Maze after 100,000 steps}\label{table:1}
\begin{center}
\begin{tabular}{ lp{3cm}llp{6cm}lp{6cm}lp{6cm}l }
\hline
\multicolumn{1}{|c||}{Wall density}& \multicolumn{1}{|r|}{EBU ($\beta=1.0$)}& \multicolumn{1}{|c|}{One-step DQN} & \multicolumn{1}{|c|}{$N$-step DQN} \\ \hline
\multicolumn{1}{|c||}{20\%} & \multicolumn{1}{|c|}{5.44 \quad 2.42} & \multicolumn{1}{|l|}{14.40 \quad 9.25} & \multicolumn{1}{|c|}{\textbf{3.26} \quad \textbf{2.24}} \\
\multicolumn{1}{|c||}{30\%} & \multicolumn{1}{|c|}{\textbf{8.14} \quad \textbf{3.03}} & \multicolumn{1}{|c|}{25.63 \quad 21.03} & \multicolumn{1}{|c|}{8.88 \quad 3.32} \\
\multicolumn{1}{|c||}{40\%} & \multicolumn{1}{|c|}{\textbf{8.61} \quad \textbf{2.52}} & \multicolumn{1}{|c|}{25.45 \quad 22.71} & \multicolumn{1}{|c|}{8.96 \quad 3.50} \\
\multicolumn{1}{|c||}{50\%} & \multicolumn{1}{|c|}{\textbf{5.51} \quad \textbf{2.34}} & \multicolumn{1}{|c|}{22.36 \quad 16.62} & \multicolumn{1}{|c|}{11.32 \ \ 3.12} \\
\hline
\end{tabular}
\end{center}
\end{table}

\subsection{49 Games of Atari 2600 Environment (Deterministic MDPs)}

The Arcade Learning Environment \cite{Bellemare2013} is one of the most popular RL benchmarks for its diverse set of challenging tasks. We use the same set of 49 Atari 2600 games, which was evaluated in Nature DQN paper \cite{Mnih2015}.

We select $\beta=0.5$ for EBU with a constant diffusion factor. For adaptive EBU, we train $K=11$ parallel learners with diffusion factors $0.0, 0.1, \ldots$, and $1.0$. We synchronize the learners at the end of each epoch (0.25M frames). We compare our algorithm to four baselines: Nature DQN \cite{Mnih2015}, Prioritized Experience Replay (PER) \cite{Schaul2016}, Retrace($\lambda$) \cite{Munos2016} and Optimality Tightening (OT) \cite{He2017}. We train EBU and baselines for 10M frames (additional 20M frames for adaptive EBU) on 49 Atari games with the same network structure, hyperparameters, and evaluation methods used in Nature DQN. The choice of such a small number of training steps is made to investigate the sample efficiency of each algorithm following \cite{Pritzel2017, He2017}. We report the mean result from 4 random seeds for adaptive EBU and 8 random seeds for all other baselines. Detailed specifications for each baseline are described in Appendix D.

\begin{figure*}
\centering
\includegraphics[trim={1.8cm 0.5cm 0cm 0cm}, width=14.0cm]{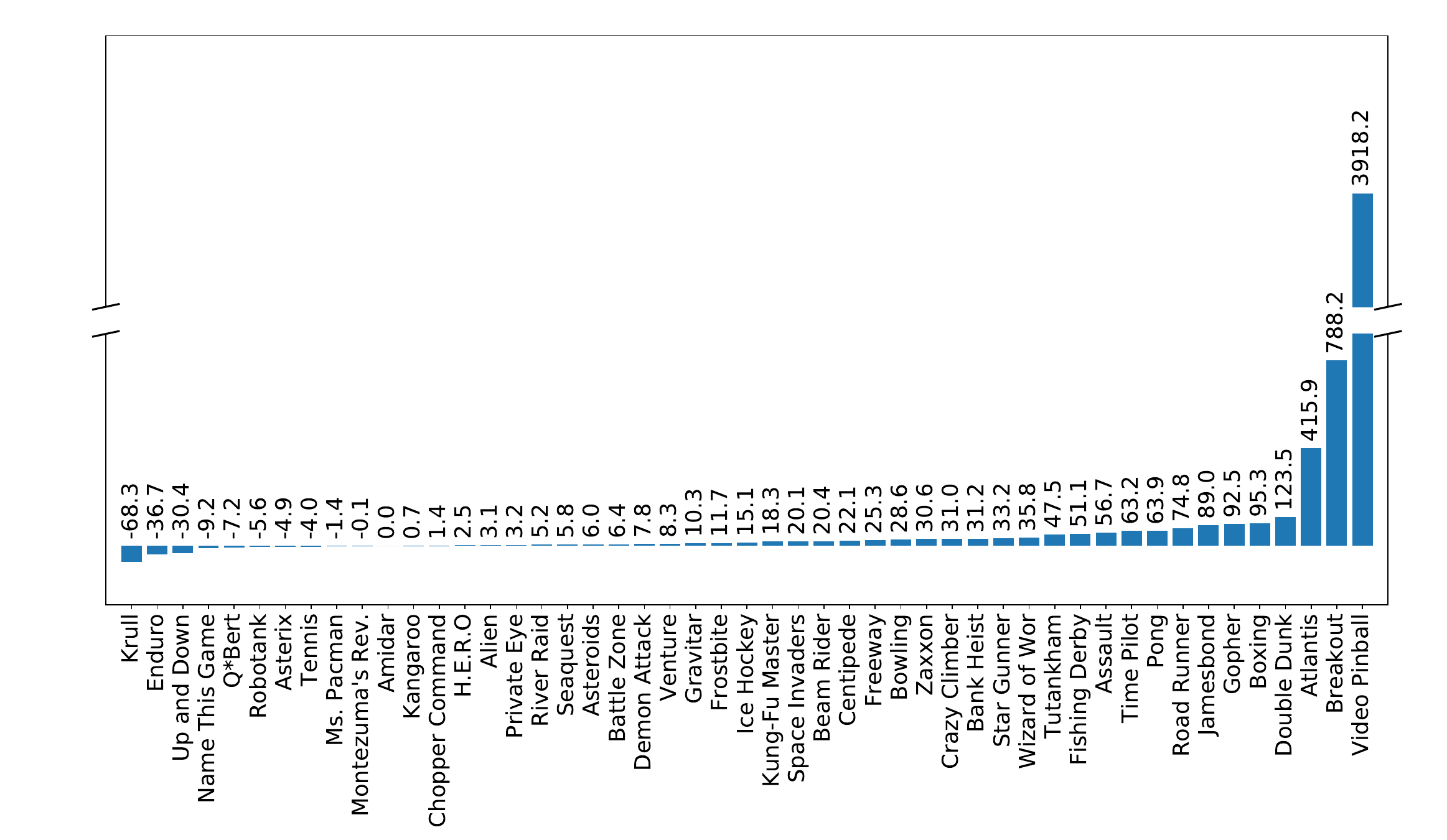}
\caption{Relative score of adaptive EBU (4 random seeds) compared to Nature DQN (8 random seeds) in percents (\%) both trained for 10M frames.}\label{fig:4}
\end{figure*}

First, we show the improvement of adaptive EBU over Nature DQN at 10M frames for all 49 games in Figure~\ref{fig:4}. To compare the performance of an agent to its baseline's, we use the following relative score, $ \frac{\text{Score}_{\text{Agent}}-\text{Score}_{\text{Baseline}}}{\max\{\text{Score}_{\text{Human}}, \text{ Score}_{\text{Baseline}}\} - \text{Score}_{\text{Random}}}$ \cite{Wang2016}. This measure shows how well an agent performs a task compared to the task's level of difficulty. EBU ($\beta=0.5$) and adaptive EBU outperform Nature DQN in 33 and 39 games out of 49 games, respectively. The large amount of improvements in games such as ``Atlantis,'' ``Breakout,'' and ``Video Pinball'' highly surpass minor failings in few games.

We use human-normalized score, $ \frac{\text{Score}_{\text{Agent}}-\text{Score}_{\text{Random}}}{|{\text{Score}_{\text{Human}}-\text{Score}_{\text{Random}}| }}$ \cite{van Hasselt2016}, which is the most widely used metric to make an apple-to-apple comparison in the Atari domain. We report the mean and the median human-normalized scores of the 49 games in Table~\ref{table:2}. The result signifies that our algorithm outperforms the baselines in both the mean and median of the human-normalized scores. PER and Retrace($\lambda$) do not show a lot of improvements for a small number of training steps as 10M frames. Since OT has to calculate the $Q$-values of neighboring states and compare them to generate the penalty term, it requires about 3 times more training time than Nature DQN. However, EBU performs iterative episodic updates using the temporary $Q$-table that is shared by all transitions in the episode, EBU has almost the same computational cost as that of Nature DQN.

The most significant result is that EBU ($\beta=0.5$) requires only 10M frames of training to achieve the mean human-normalized score reported in Nature DQN, which is trained for 200M frames. Although 10M frames are not enough to achieve the same median score, adaptive EBU trained for 20M frames achieves the median normalized score. These results signify the efficacy of backward value propagation in the early stages of training. Raw scores for all 49 games are summarized in Appendix B. Learning curves of adaptive EBU for all 49 games are reported in Appendix C.

\begin{table}[h]
\small
\caption{Summary of training time and human-normalized performance. Training time refers to the total time required to train 49 games of 10M frames using a single NVIDIA TITAN Xp for a single random seed. We use multi-GPUs to train learners of adaptive EBU in parallel. (*) The result of OT differs from the result reported in \cite{He2017} due to different evaluation methods (i.e. not limiting the maximum number of steps for a test episode and taking maximum score from random seeds). (**) We report the scores of Nature DQN (200M) from \cite{Mnih2015}.}\label{table:2}
\begin{center}
\begin{tabular}{ |c||c|c|c| }
\hline
Algorithm (frames) & Training Time (hours) & Mean (\%) & Median (\%)\\
\hline
\multicolumn{1}{|l||}{EBU ($\beta=0.5$) (10M)} & 152 & 253.55 & 51.55 \\
\multicolumn{1}{|l||}{EBU (adaptive $\beta$) (10M)} & 203 & 275.78 & 63.80 \\
\multicolumn{1}{|l||}{Nature DQN (10M)} & 138 & 133.95 & 40.42 \\
\multicolumn{1}{|l||}{PER (10M) }& 146 & 156.57 & 40.86 \\
\multicolumn{1}{|l||}{Retrace($\lambda$) (10M)} & 154 & 93.77 & 41.99 \\
\multicolumn{1}{|l||}{OT (10M)*} & 407 & 162.66 & 49.42 \\
\hline
\multicolumn{1}{|l||}{EBU (adaptive $\beta$) (20M)}& 450 & 347.99 & 92.50 \\
\multicolumn{1}{|l||}{Nature DQN (200M)**}& - &241.06 & 93.52 \\
\hline
\end{tabular}
\end{center}
\end{table}

\subsection{Analysis on the Role of the Diffusion Factor $\beta$}

\begin{figure}[t]
\begin{center}
\includegraphics[trim={2cm 0.0cm -2cm 0.0cm}, width=15.5cm]{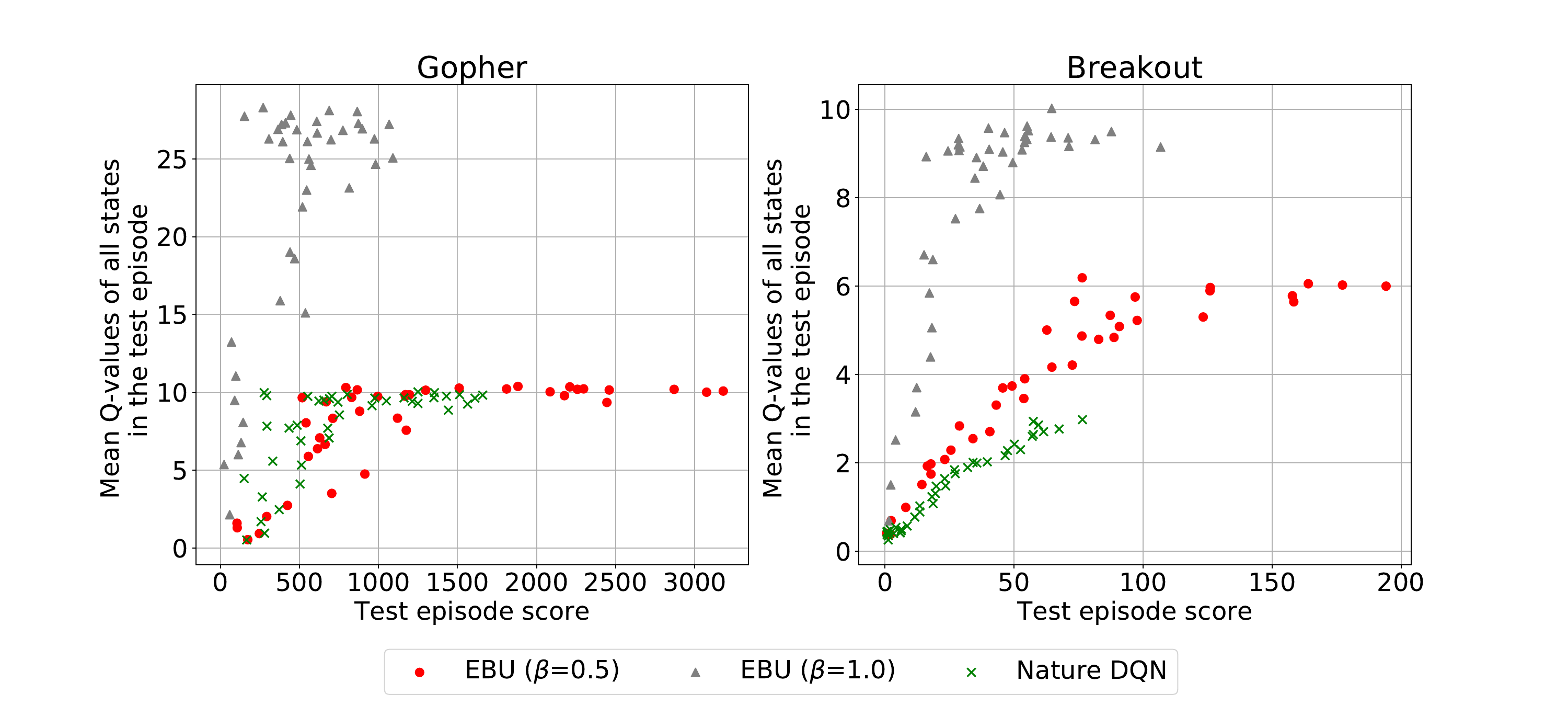}
\end{center}
\caption{Episode scores and average $Q$-values of all state-action pairs in ``Gopher'' and ``Breakout''.}
\label{fig:5}
\end{figure}

In this section, we make comparisons between our own EBU algorithms. EBU ($\beta=1.0$) works the best in the MNIST Maze environment because we use MNIST images for the state representation to allow consecutive states to exhibit little correlation. However, in the Atari domain, consecutive states are often different in a scale of few pixels only. As a consequence, EBU ($\beta=1.0$) underperforms EBU ($\beta=0.5$) in most of the Atari games. In order to analyze this phenomenon, we evaluate the $Q$-values learned at the end of each training epoch. We report the test episode score and the corresponding mean $Q$-values of all transitions within the test episode (Figure ~\ref{fig:5}). We notice that the EBU ($\beta=1.0$) is trained to output highly overestimated $Q$-values compared to its actual return. Since the EBU method performs recursive $\max$ operations, EBU outputs higher (possibly overestimated) $Q$-values than Nature DQN. This result indicates that sequentially updating correlated states with overestimated values may destabilize the learning process. However, this result clearly implies that EBU ($\beta = 0.5$) is relatively free from the overestimation problem.

Next, we investigate the efficacy of using an adaptive diffusion factor. In Figure~\ref{fig:6}, we present how adaptive EBU adapts its diffusion factor during the course of training in ``Breakout''. In the early stage of training, the agent barely succeeds in breaking a single brick. With a high $\beta$ close to 1, values can be directly propagated from the rewarded state to the state where the agent has to bounce the ball up. Note that the performance of adaptive EBU follows that of EBU ($\beta=1.0$) up to about 5M frames. As the training proceeds, the agent encounters more rewards and various trajectories that may cause overestimation. As a consequence, we discover that the agent anneals the diffusion factor to a lower value of 0.5. The trend of how the diffusion factor adapts differs from game to game. Refer to the diffusion factor curves for all 49 games in Appendix C to check how adaptive EBU selects the best diffusion factor.

\begin{figure}[h]
\begin{center}
\subfloat[]{\includegraphics[trim={0.0cm 0cm 0.0cm 0.0cm}, width=7.0cm]{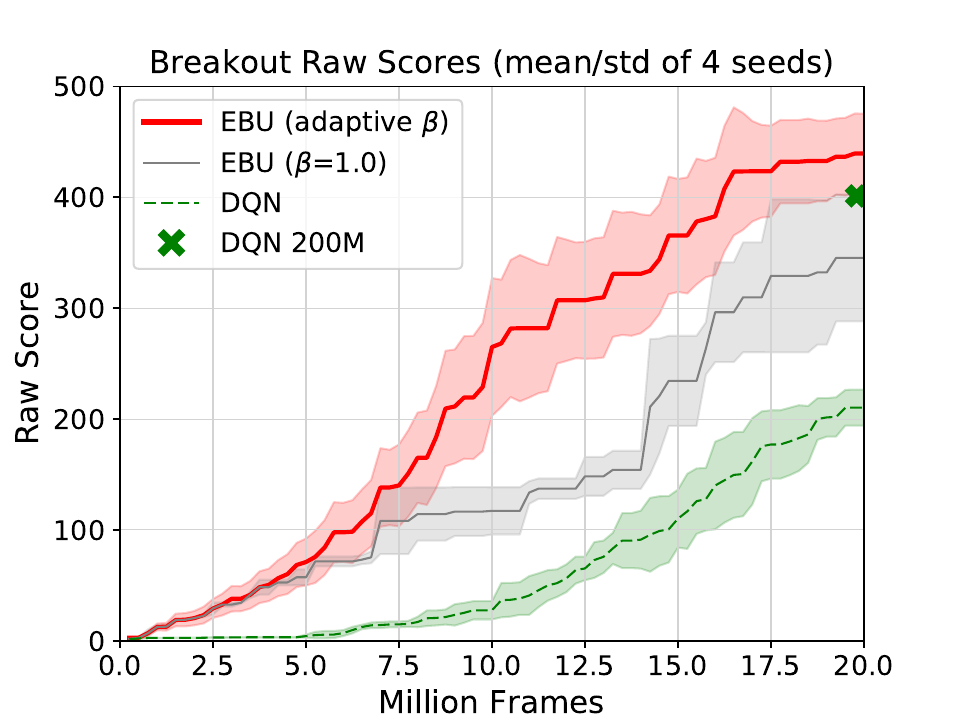}}
\subfloat[]{\includegraphics[trim={0.0cm 0cm 0.0cm 0.0cm}, width=7.0cm]{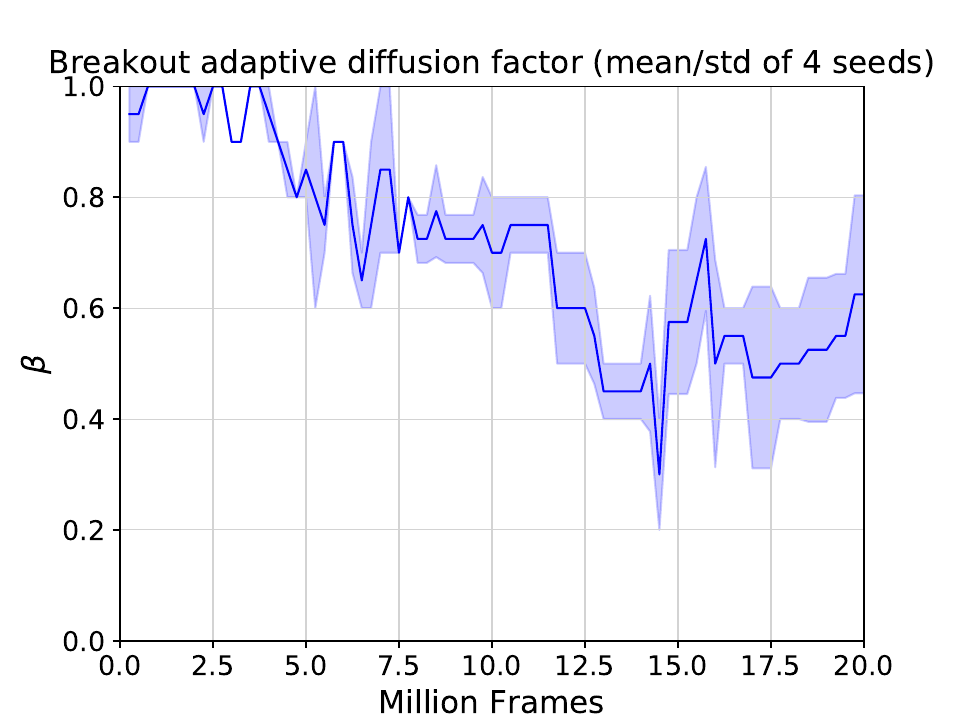}}
\end{center}
\caption{(a) Test scores in ``Breakout''. Mean and standard deviation from 4 random seeds are plotted. (b) Adaptive diffusion factor of adaptive EBU in ``Breakout''.}
\label{fig:6}
\end{figure}

\section{Conclusion}
In this work, we propose Episodic Backward Update, which samples transitions episode by episode, and updates values recursively in a backward manner. Our algorithm achieves fast and stable learning due to its efficient value propagation. We theoretically prove the convergence of our method, and experimentally show that our algorithm outperforms other baselines in many complex domains, requiring only about 10\% of samples. Since our work differs from DQN only in terms of the target generation, we hope that we can make further improvements by combining with other successful deep reinforcement learning methods.

\subsubsection*{Acknowledgments}
This work was supported by the ICT R\&D program of MSIP/IITP. [2016-0-00563, Research on Adaptive Machine Learning Technology Development for Intelligent Autonomous Digital Companion]

\newpage
\subsection*{\large Appendix A  \qquad  Episodic Backward Update with an adaptive diffusion factor}

\begin{algorithm}[h]
\small
   \caption{Adaptive Episodic Backward Update}
   \label{alg:2}
    \begin{algorithmic}[1]
      \State \textbf{Initialize}: replay memory $D$ to capacity $N$, $K$ on-line action-value function $Q_{1}(\cdot; \bm{\theta_1}), \ldots, Q_{K}(\cdot; \bm{\theta_K})$, $K$ target action-value function $\hat{Q}_{1}(\cdot; \bm{\theta^-_1}), \ldots, \hat{Q}_{K}(\cdot; \bm{\theta^-_K})$, training score recorder  $TS=\operatorname{zeros}(K)$, diffusion factors $\beta_1,\ldots,\beta_K$ for each learner network
      \For {episode = 1 to $M$ }
        \State Select $Q_{\operatorname{actor}}=Q_{i}$  as the actor network for the current episode, where $i=(\operatorname{episode}-1)\%K+1$
        \For {$t = 1$ to \mbox{Terminal} } 

          \State With probability $\epsilon$ select a random action $a_t$
	  \State Otherwise select $a_t = \argmax_a Q_{\operatorname{actor}}\left(s_t,a\right)$     
          \State Execute action $a_t$, observe reward $r_t$ and next state $s_{t+1}$
          \State Store transition $(s_t,a_t,r_t,s_{t+1})$ in $D$
          \State Add training score for the current learner $TS[i] += r_t$
          \State Sample a random episode $E = \{\bm{S,A,R,S^\prime}\}$ from $D$, set $T=\operatorname{length}(E)$
	\For {$j=1$ to $K \textrm{ (this loop is processed in parallel)}$}
          \State Generate temporary target $Q$-table, $\tilde{Q}_j = \hat{Q}_i\left(\bm{S^\prime},\bm{\cdot};\bm{\theta^-_j}\right)$
          \State Initialize target vector $\bm{y} =  \operatorname{zeros}(T)$, $\bm{y}_T \leftarrow \bm{R}_T$
          \For {$k = T-1$ to 1}
            \State $\tilde{Q}_j\left[\bm{A}_{k+1}, k\right] \leftarrow \beta_j \bm{y}_{k+1} + (1-\beta_j) \tilde{Q}_j\left[\bm{A}_{k+1}, k\right]$
            \State $\bm{y}_k \leftarrow  \bm{R}_k + \gamma\max_{a}  \tilde{Q}_j\left[a, k\right] $
          \EndFor
          \State Perform a gradient descent step on $\left(\bm{y}-Q_j\left(\bm{S},\bm{A};\bm{\theta_j}\right)\right)^2$ with respect to $\bm{\theta_j}$
	\EndFor
          \State Every $C$ steps reset $\hat{Q}_1 = Q_1, \ldots, \hat{Q}_K = Q_K$
        \EndFor
	\State Every $B$ steps synchronize all learners with the best training score, $b=\argmax_k{TS[k]}$. \newline  $Q_{1}(\cdot; \bm{\theta_1})=Q_{b}(\cdot; \bm{\theta_b}) ,\ldots, Q_{K}(\cdot; \bm{\theta_K})=Q_{b}(\cdot; \bm{\theta_b})$ and $\hat{Q}_{1}(\cdot; \bm{\theta_1})=\hat{Q}_{b}(\cdot; \bm{\theta_b^-}) ,\ldots, \hat{Q}_{K}(\cdot; \bm{\theta_K})=\hat{Q}_{b}(\cdot; \bm{\theta_b^-})$. Reset the training score recorder $TS=\operatorname{zeros}(K)$.
      \EndFor
    \end{algorithmic}
\end{algorithm}

\clearpage
\newpage

\subsection*{\large Appendix B  \qquad  Raw scores of all 49 games.}
\begin{table*}[h]
\scriptsize
\caption{Raw scores after 10M frames of training. Mean scores from 4 random seeds are reported for adaptive EBU. 8 random seeds are used for all other baselines. We use the results at Nature DQN paper to report the scores at 200M frames. We run their code (\url{https://github.com/deepmind/dqn})  to report scores for 10M frames. Due to the use of different random seeds, the result of Nature DQN at 10M frames may be better than that of Nature DQN at 200M frames in some games. Bold texts indicate the best score out of the 5 results trained for 10M frames.}\label{table:3}
\begin{center}
\begin{tabular}{  lp{1.0cm}llp{0.8cm}lp{0.8cm}lp{0.8cm}lp{0.8cm}lp{0.8cm}lp{0.8cm}lp{0.8cm}l }

\multicolumn{1}{c|}{Training Frames}& \multicolumn{6}{|c|}{10M}& \multicolumn{1}{|c|}{20M}& \multicolumn{1}{|c}{200M}\\
\hline
    & \multicolumn{1}{|c}{EBU($\beta$=0.5)} & Adap. EBU& DQN& PER&Retrace$(\lambda)$& OT &\multicolumn{1}{|c|}{Adap. EBU}& Nature DQN\\
\hline
Alien & \multicolumn{1}{|l}{708.08} & 894.15 & 690.32 & 1026.96 & 708.29 & \textbf{1078.67} &\multicolumn{1}{|l|}{1225.36} & 3069.00 \\
Amidar & \multicolumn{1}{|l}{117.94} & 124.63 & 125.42 & 167.63 & 182.68 & \textbf{220.00} & \multicolumn{1}{|l|}{209.96} & 739.50 \\
Assault & \multicolumn{1}{|l}{\textbf{4109.18}} & 3676.95 & 2426.94 & 2720.69 & 2989.05 & 2499.23 & \multicolumn{1}{|l|}{3943.23} & 3359.00 \\
Asterix & \multicolumn{1}{|l}{1898.12} & 2533.27 & \textbf{2936.54} & 2218.54 & 1798.54 & 2592.50 &\multicolumn{1}{|l|}{3221.25} & 6012.00 \\
Asteroids & \multicolumn{1}{|l}{1002.17} & \textbf{1402.43} & 654.99 & 993.50 & 886.92 & 985.88 &\multicolumn{1}{|l|}{2378.84} & 1629.00 \\
Atlantis & \multicolumn{1}{|l}{61708.75} & 87944.38 & 20666.84 & 35663.83 & \textbf{98182.81} & 57520.00 &\multicolumn{1}{|l|}{141226.00} & 85641.00 \\
Bank heist & \multicolumn{1}{|l}{359.62} & \textbf{459.42} & 234.70 & 312.96 & 223.50 & 407.42 &\multicolumn{1}{|l|}{680.43} & 429.70 \\
Battle zone & \multicolumn{1}{|l}{20627.73} & 24748.50 & 22468.75 & 20835.74 & \textbf{30128.36} & 20400.48& \multicolumn{1}{|l|}{30502.53} & 26300.00 \\
Beam rider & \multicolumn{1}{|l}{5628.99} & 4785.27 & 3682.92 & 4586.07 & 4093.76 & \textbf{5889.54}&  \multicolumn{1}{|l|}{6634.43} & 6846.00 \\
Bowling & \multicolumn{1}{|l}{52.02} & \textbf{102.89} & 65.23 & 42.74 & 42.62 & 53.45 &\multicolumn{1}{|l|}{113.75} & 42.40 \\
Boxing & \multicolumn{1}{|l}{55.95} & \textbf{72.69} & 37.28 & 4.64 & 6.76 & 60.89&\multicolumn{1}{|l|}{96.35} & 71.80 \\
Breakout & \multicolumn{1}{|l}{174.76} & \textbf{265.62} & 28.36 & 164.22 & 171.86 & 75.00& \multicolumn{1}{|l|}{443.34} & 401.20 \\
Centipede & \multicolumn{1}{|l}{4651.28} & \textbf{8389.16} & 6207.30 & 4385.41 & 5986.16 & 5277.79& \multicolumn{1}{|l|}{8389.16} & 8309.00 \\
Chopper Command & \multicolumn{1}{|l}{1196.67} & 1294.45 & 1168.67 & 1344.24 & 1353.76 & \textbf{1615.00}& \multicolumn{1}{|l|}{1909.23} & 6687.00 \\
Crazy Climber & \multicolumn{1}{|l}{65329.63} & \textbf{94135.04} & 74410.74 & 53166.47 & 64598.21 & 92972.08&\multicolumn{1}{|l|}{103780.15} & 114103.00 \\
Demon Attack & \multicolumn{1}{|l}{7924.14} & \textbf{8368.16} & 7772.39 & 4446.03 & 6450.84 & 6872.04&\multicolumn{1}{|l|}{9099.16} & 9711.00 \\
Double Dunk & \multicolumn{1}{|l}{-16.19} & \textbf{-14.12} & -17.94 & -15.62 & -15.81 & -15.92 &\multicolumn{1}{|l|}{-12.78} & -18.10 \\
Enduro & \multicolumn{1}{|l}{415.59} & 326.45 & 516.10 & 308.75 & 208.10 & \textbf{615.05} & \multicolumn{1}{|l|}{410.95} & 301.80 \\
Fishing Derby & \multicolumn{1}{|l}{-39.13} & \textbf{-15.85} & -65.53 & -78.49 & -75.74 & -69.66 &\multicolumn{1}{|l|}{9.22} & -0.80 \\
Freeway & \multicolumn{1}{|l}{19.07} & \textbf{23.71} & 16.24 & 9.35 & 15.26 & 14.63 &\multicolumn{1}{|l|}{34.36} & 30.30 \\
Frostbite & \multicolumn{1}{|l}{437.92} & 966.23 & 466.02 & 536.00 & 825.00 & \textbf{2452.75} &\multicolumn{1}{|l|}{1760.15} & 328.30 \\
Gopher & \multicolumn{1}{|l}{3318.50} & \textbf{3634.67} & 1726.52 & 1833.67 & 3410.75 & 2869.08&\multicolumn{1}{|l|}{5611.30} & 8520.00 \\
Gravitar & \multicolumn{1}{|l}{294.58} & \textbf{450.18} & 193.55 & 319.79 & 272.08 & 263.54 &\multicolumn{1}{|l|}{611.99} & 306.70 \\
H.E.R.O. & \multicolumn{1}{|l}{3089.90} & 3398.55 & 2767.97 & 3052.04 & 3079.43 & \textbf{10698.25}&\multicolumn{1}{|l|}{4308.23} & 19950.00 \\
Ice Hockey & \multicolumn{1}{|l}{-4.71} & \textbf{-2.96} & -4.79 & -7.73 & -6.13 & -5.79 &\multicolumn{1}{|l|}{-2.96} & -1.60 \\
Jamesbond & \multicolumn{1}{|l}{391.67} & \textbf{519.52} & 183.35 & 421.46 & 436.25 & 325.21 & \multicolumn{1}{|l|}{1043.66} & 576.70 \\
Kangaroo & \multicolumn{1}{|l}{535.83} & 731.13 & 709.88 & \textbf{782.50} & 538.33 & 708.33&\multicolumn{1}{|l|}{2018.83} & 6740.00 \\
Krull & \multicolumn{1}{|l}{7587.24} & 8733.52 & \textbf{24109.14} & 6642.58 & 6346.40 & 7468.70&\multicolumn{1}{|l|}{10016.72} & 3805.00 \\
Kung-Fu Master & \multicolumn{1}{|l}{20578.33} & \textbf{26069.68} & 21951.72 & 18212.89 & 18815.83 & 22211.25&\multicolumn{1}{|l|}{30387.78} & 23270.00 \\
Montezuma's Revenge & \multicolumn{1}{|l}{0.00} & 0.00 & \textbf{3.95} & 0.43 & 0.00 & 0.00 &\multicolumn{1}{|l|}{0.00} & 0.00 \\
Ms. Pacman & \multicolumn{1}{|l}{1249.79} & 1652.37 & \textbf{1861.80} & 1784.75 & 1310.62 & 1849.00&\multicolumn{1}{|l|}{1920.25} & 2311.00 \\
Name This Game & \multicolumn{1}{|l}{6960.46} & 7075.53 & \textbf{7560.33} & 5757.03 & 6094.08 & 7358.25&\multicolumn{1}{|l|}{7565.67} & 7257.00 \\
Pong & \multicolumn{1}{|l}{5.53} &\textbf{ 16.49} & -2.68 & 12.83 & 8.65 & 2.60&\multicolumn{1}{|l|}{20.23} & 18.90 \\
Private Eye & \multicolumn{1}{|l}{471.76} & \textbf{3609.96} & 1388.45 & 269.28 & 714.97 &1277.53& \multicolumn{1}{|l|}{7940.27} & 1788.00 \\
Q*Bert & \multicolumn{1}{|l}{785.00} & 1074.77 & 2037.21 & 1215.42 & 3192.08 & \textbf{3955.10}&\multicolumn{1}{|l|}{2437.83} & 10596.00 \\
River Raid & \multicolumn{1}{|l}{3460.62} & 4268.28 & 3636.72 & \textbf{6005.62} & 4178.92 & 4643.62&\multicolumn{1}{|l|}{5671.51} & 8316.00 \\
Road Runner & \multicolumn{1}{|l}{10086.74} & 15681.49 & 8978.17 & 17137.92 & 9390.83 & \textbf{19081.55}&\multicolumn{1}{|l|}{28286.88} & 18257.00 \\
Robotank & \multicolumn{1}{|l}{11.65} & 15.34 & \textbf{16.11} & 6.46 & 9.90 & 12.17&\multicolumn{1}{|l|}{20.73} & 51.60 \\
Seaquest & \multicolumn{1}{|l}{1380.67} & 1926.10 & 762.10 & 1955.67 & 2275.83 & \textbf{2710.33}&\multicolumn{1}{|l|}{5313.43} & 5286.00 \\
Space Invaders & \multicolumn{1}{|l}{797.29} & \textbf{1058.25} & 755.95 & 762.54 & 783.35 & 869.83& \multicolumn{1}{|l|}{1148.21} & 1976.00 \\
Star Gunner & \multicolumn{1}{|l}{2737.08} & \textbf{3892.51} & 708.66 & 2629.17 & 2856.67 & 1710.83& \multicolumn{1}{|l|}{17462.88} & 57997.00 \\
Tennis & \multicolumn{1}{|l}{-3.41} & -0.96 & \textbf{0.00} & -10.32 & -2.50 & -6.37& \multicolumn{1}{|l|}{-0.93} & -2.50 \\
Time Pilot & \multicolumn{1}{|l}{3505.42} & \textbf{4567.18} & 3076.98 & 4434.17 & 3651.25 & 4012.50& \multicolumn{1}{|l|}{4567.18} & 5947.00 \\
Tutankham & \multicolumn{1}{|l}{204.83} & 239.51 & 165.27 & 255.74 & 156.16 & \textbf{247.81}&\multicolumn{1}{|l|}{299.11} & 186.70 \\
Up and Down & \multicolumn{1}{|l}{6841.83} & 6754.11 & \textbf{9468.04} & 7397.29 & 7574.53 & 6706.83&\multicolumn{1}{|l|}{10984.70} & 8456.00 \\
Venture & \multicolumn{1}{|l}{105.10} & \textbf{194.89} & 96.70 & 60.40 & 50.85 & 106.67&\multicolumn{1}{|l|}{242.56} & 380.00 \\
Video Pinball & \multicolumn{1}{|l}{\textbf{84859.24}} & 78405.27 & 17803.69 & 55646.66 & 18346.58 & 38528.58& \multicolumn{1}{|l|}{84695.96} & 42684.00 \\
Wizard of Wor & \multicolumn{1}{|l}{1249.89} & \textbf{2030.63} & 529.85 & 1175.24 & 1083.69 & 1177.08& \multicolumn{1}{|l|}{4185.40} & 3393.00 \\
Zaxxon & \multicolumn{1}{|l}{3221.67} & 3487.38 & 685.84 & \textbf{3928.33} & 596.67 & 2467.92&\multicolumn{1}{|l|}{6548.52} & 4977.00 \\

\end{tabular}
\end{center}
\end{table*}

\clearpage
\subsection*{\large Appendix C  \qquad  Learning curves and corresponding adaptive diffusion factor}

\begin{figure}[h]
\centering
{\includegraphics[trim={0.25cm 0cm 0.25cm 0.0cm},width=4.7in]{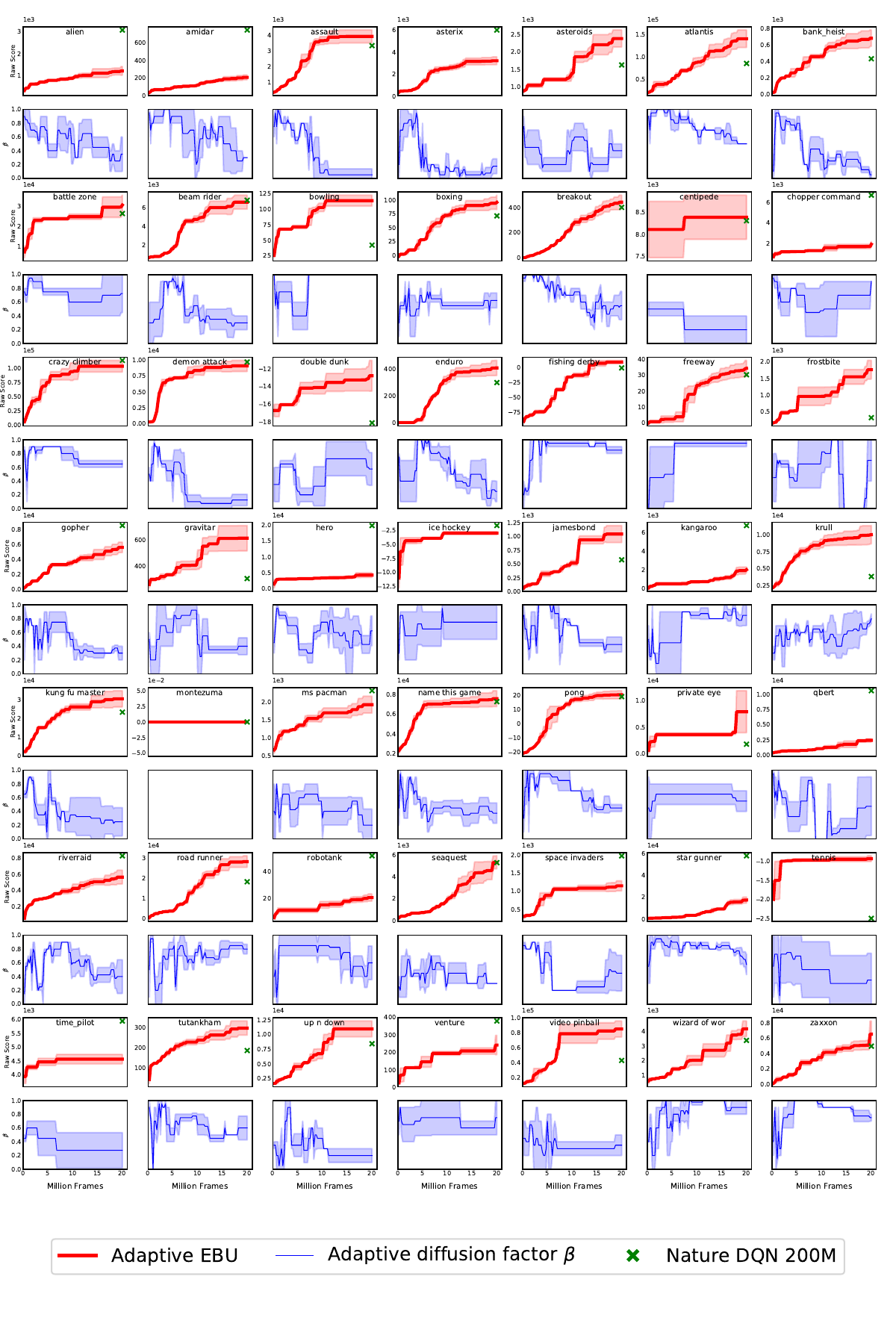}}
\caption{Test scores and diffusion factor of Adaptive EBU. We report the mean and the standard deviation from 4 random seeds. We compare the performance of adaptive EBU with the result reported in Nature DQN, trained for 200M frames. The blue curve below each test score plot shows how adaptive EBU adapts its diffusion factor during the course of training.}
\label{supp-fig:1}
\end{figure}

\newpage

\subsection*{\large Appendix D  \qquad  Network structure and hyperparameters}
\subsubsection*{\textbf{2D MNIST Maze Environment}}
Each state is given as a grey scale 28 $\times$ 28 image. We apply 2 convolutional neural networks (CNNs) and one fully connected layer to get the output $Q$-values for 4 actions: up, down, left and right. The first CNN uses 64 channels with 4 $\times$ 4 kernels and stride of 3. The next CNN uses 64 channels with 3 $\times$ 3 kernels and stride of 1. Then the layer is fully connected into a size of 512. Then we fully connect the layer into a size of the action space 4. After each layer, we apply a rectified linear unit.

We train the agent for a total of 200,000 steps. The agent performs $\epsilon$-greedy exploration. $\epsilon$ starts from 1 and is annealed to 0 at 200,000 steps in a quadratic manner: $\epsilon = \frac{1}{(200,000)^2}(\operatorname{step} - 200,000)^2 $. We use RMSProp optimizer with a learning rate of 0.001. The online-network is updated every 50 steps, the target network is updated every 2000 steps. The replay memory size is 30000 and we use minibatch size of 350. We use a discount factor $\gamma=0.9$ and a diffusion factor $\beta=1.0$. The agent plays the game until it reaches the goal or it stays in the maze for more than 1000 time steps.

\subsubsection*{\textbf{49 Games of Atari 2600 Domain}}
\textbf{Common specifications for all baselines}\newline
Almost all specifications such as hyperparameters and network structures are identical for all baselines. We use exactly the same network structure and hyperparameters of Nature DQN (\hyperlink{Mnih, 2015}{Mnih et al., 2015}).
The raw observation is preprocessed into a gray scale image of 84 $\times$ 84. Then it passes through three convolutional layers: 32 channels with 8 $\times$ 8 kernels with a stride of 4; 64 channels with 4 $\times$ 4 kernels with a stride of 2; 64 channels with 3 $\times$ 3 kernels with a stride of 1. Then it is fully connected into a size of 512. Then it is again fully connected into the size of the action space. 

We train baselines for 10M frames each, which is equivalent to 2.5M steps with frameskip of 4. The agent performs $\epsilon$-greedy exploration. $\epsilon$ starts from 1 and is linearly annealed to reach the final value 0.1 at 4M frames of training. We adopt 30 no-op evaluation methods. We use 8 random seeds for 10M frames and 4 random seeds for 20M frames. The network is trained by RMSProp optimizer with a learning rate of 0.00025. At each update (4 agent steps or 16 frames), we update transitions in minibatch with size 32. The replay buffer size is 1 million steps (4M frames). The target network is updated every 10,000 steps. The discount factor is $\gamma=0.99$.

We divide the training process into 40 epochs (80 epochs for 20M frames) of 250,000 frames each. At the end of each epoch, the agent is tested for 30 episodes with $\epsilon=0.05$. The agent plays the game until it runs out of lives or time (18,000 frames, 5 minutes in real time). 

Below are detailed specifications for each algorithm.

\textbf{1. Episodic Backward Update} \newline
We used $\beta=0.5$ for the version EBU with constant diffusion factor. For adaptive EBU, we used 11 parallel learners ($K=11$) with diffusion factors 0.0, 0.1, \ldots, 1.0. We synchronize the learners at every 250,000 frames ($B=62,500$ steps). 

\textbf{2. Prioritized Experience Replay} \newline
We use the rank-based DQN version of Prioritized ER and use the hyperparameters chosen by the authors (\hyperlink{Schaul, 2016}{Schaul et al., 2016}):  $\alpha=0.5 \rightarrow 0$ and $\beta=0$. \newline\newline
\textbf{3. Retrace$(\lambda)$} \newline
Just as EBU, we sample a random episode and then generate the Retrace target for the transitions in the sampled episode. We follow the same evaluation process as that of \hyperlink{Munos, 2016}{Munos et al., 2016}. First, we calculate the trace coefficients from $s=1$ to $s=T$ (terminal).
\begin{equation}
c_s = \lambda \min \left(1, \frac{\pi(a_s|x_s)}{\mu(a_s|x_s)}\right)
\end{equation}
Where $\mu$ is the behavior policy of the sampled transition and the evaluation policy $\pi$ is the current policy.
Then we generate a loss vector for transitions in the sample episode from $t=T$ to $t=1$.
\begin{equation}
\Delta Q(x_{t-1},a_{t-1}) = c_t \lambda \Delta Q(x_{t},a_{t}) + \left[r(x_{t-1},a_{t-1}) + \gamma \mathbb{E}_{\pi}Q(x_t,:) - Q(x_{t-1},a_{t-1}) \right].
\end{equation}
\textbf{4. Optimality Tightening} \newline
We use the source code (\url{https://github.com/ShibiHe/Q-Optimality-Tightening}), modify the maximum test steps and test score calculation to match the evaluation policy of Nature DQN.

\newpage
\subsection*{\large Appendix E  \qquad  Supplementary figure: backward update algorithm}

\begin{figure}[h]
\centering
{\includegraphics[trim={0cm 0cm 0cm 1cm},width=5.3in]{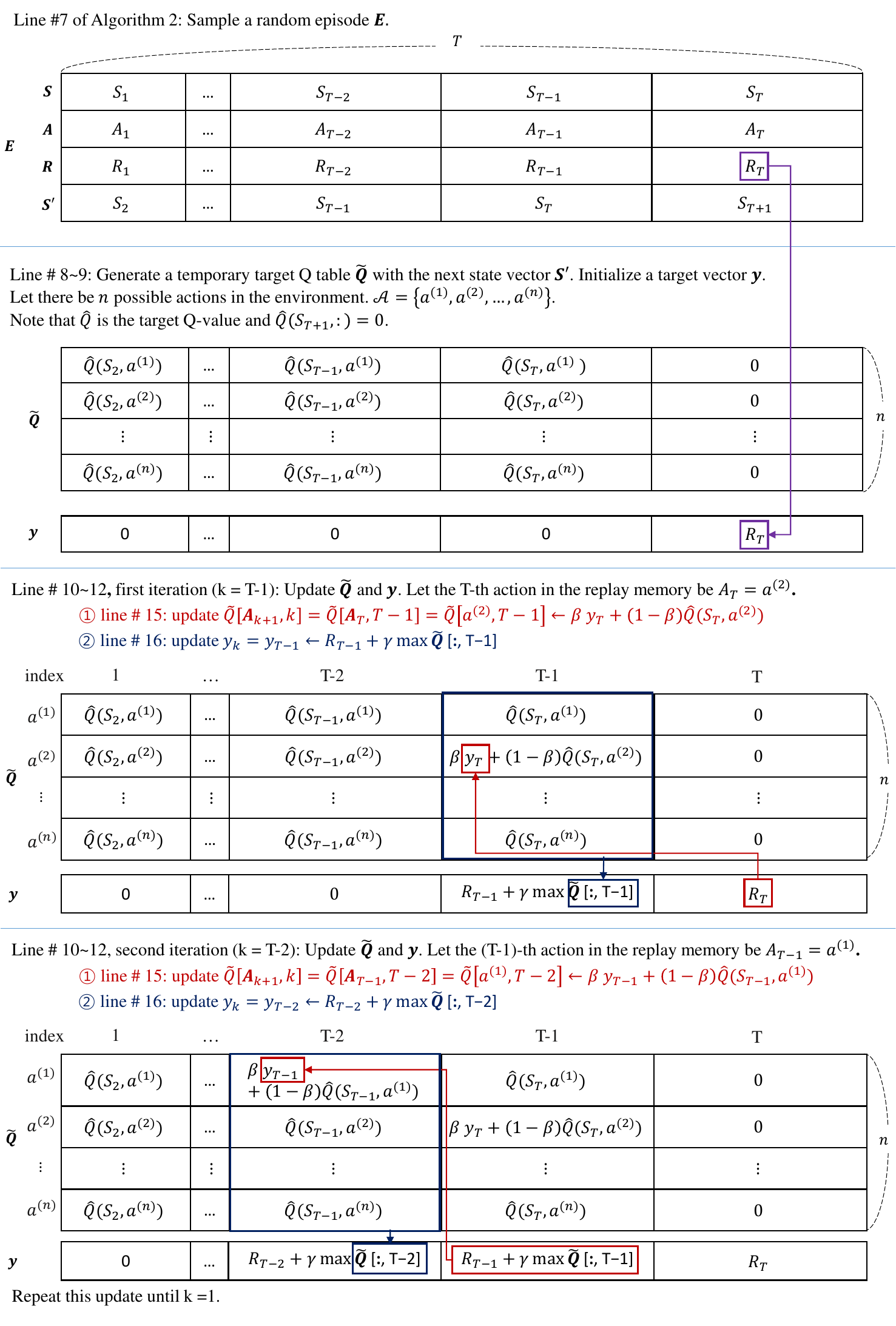}}
\caption{Target generation process from the sampled episode E}
\label{supp-fig:1}
\end{figure}

\newpage

\subsection*{\large Appendix F  \qquad  Comparison to other multi-step methods.}

\begin{figure}[h]
\begin{center}
\includegraphics[trim={0cm 2cm 0cm 0cm}, width=14cm]{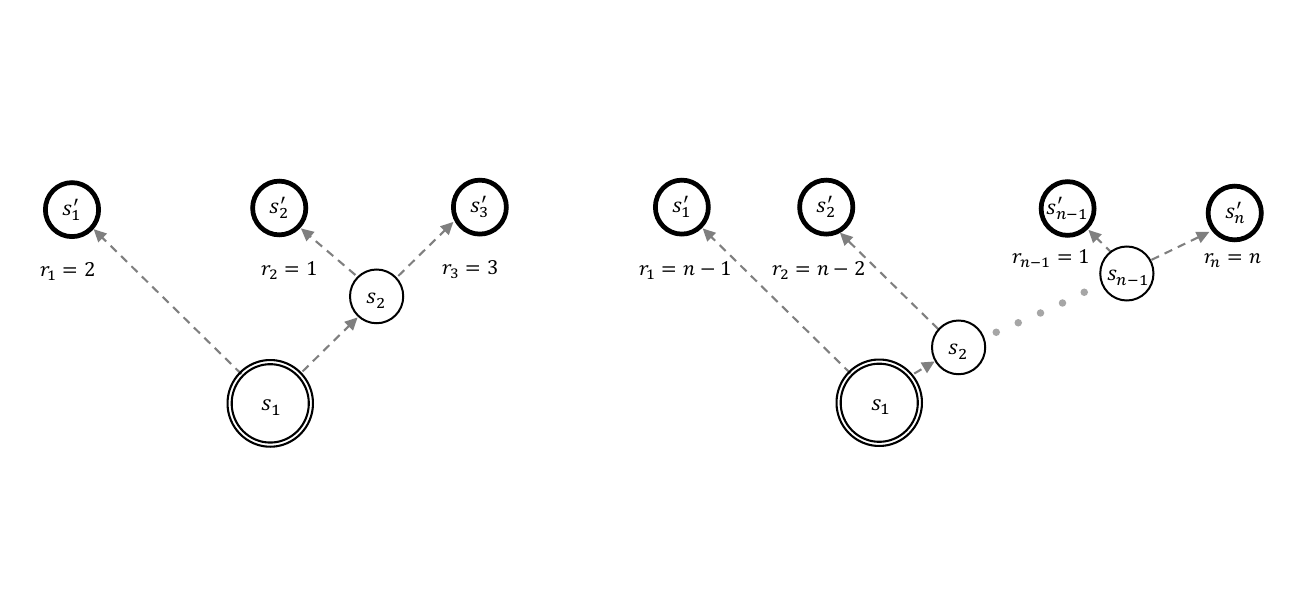}
\end{center}
\caption{A motivating example where $Q(\lambda)$ underperforms Episodic Backward Update. \textbf{Left:} A simple navigation domain with 3 possible episodes. $s_1$ is the initial state. States with ' signs are the terminal states. \textbf{Right:} An extended example with $n$ possible episodes.}
\label{fig:2}
\end{figure}

Imagine a toy navigation environment as in Figure~\ref{fig:2}, left. Assume that an agent has experienced all possible trajectories: $(s_1\rightarrow s_{1}^{'})$; $(s_1\rightarrow s_2 \rightarrow s_{2}^{'})$ and $(s_1\rightarrow s_2 \rightarrow s_{3}^{'})$. Let the discount factor $\gamma$ be 1. Then optimal policy is $(s_1\rightarrow s_2 \rightarrow s_{3}^{'})$. With a slight abuse of notation let $Q(s_i,s_j)$ denote the value of the action that leads to the state $s_j$ from the state $s_i$. We will show that $Q(\lambda)$ and $Q^*(\lambda)$ methods underperform Episodic Backward Update in such examples with many suboptimal branching paths.

$Q(\lambda)$ method cuts trace of the path when the path does not follow greedy actions given the current $Q$-value. For example, assume a $Q(\lambda)$ agent has updated the value $Q(s_1, s_{1}^{'})$ at first. When the agent tries to update the values of the episode $(s_1\rightarrow s_2 \rightarrow s_{3}^{'})$, the greedy policy of the state $s_1$ heads to $s_{1}^{'}$. Therefore the trace of the optimal path is cut and the reward signal $r_3$ is not passed to $Q(s_1,s_2)$. This problem becomes more severe if the number of suboptimal branches increases as illustrated in Figure~\ref{fig:2}, right. Other variants of $Q(\lambda)$ algorithm that cut traces, such as Retrace($\lambda$), have the same problem. EBU does not suffer from this issue, because EBU does not cut the trace, but performs max operations at every branch to propagate the maximum value.

$Q^*(\lambda)$ is free from the issues mentioned above since it does not cut traces. However, to guarantee convergence to the optimal value function, it requires the parameter $\lambda$ to be less than $\frac{1-\gamma}{2\gamma}$. In convention, the discount factor $\gamma \approx 1$. For a small value of $\lambda$ that satisfies the constraint, the update of distant returns becomes nearly negligible. However, EBU does not have any constraint of the diffusion factor $\beta$ to guarantee convergence.

\newpage
\subsection*{\large Appendix G  \qquad  Theoretical guarantees}

\label{headings}
 Now, we will prove that the episodic backward update algorithm converges to the true action-value function ${Q^*}$ in the case of finite and deterministic environment.

\begin{definition}(Deterministic MDP)

$\uppercase{M}=(\mathcal{S}, \mathcal{A}, \uppercase{P}, \uppercase{R})$ is a \textbf{deterministic MDP} if  $\exists g: \mathcal{S} \times \mathcal{A} \rightarrow \mathcal{S} $ s.t. 

  $P(s^\prime|s,a) = \begin{cases} 1 & \text{if } s^{\prime}=g(s,a)   \\ 0 & \text{else } \end{cases}$ $\forall (s,a,s^{\prime}) \in \mathcal{S} \times \mathcal{A} \times \mathcal{S},$

\end{definition}

In the episodic backward update algorithm, a single (state, action) pair can be updated through multiple episodes, where the evaluated targets of each episode can be different from each other. Therefore, unlike the bellman operator, episodic backward operator depends on the exploration policy for the MDP. Therefore, instead of expressing different policies in each state, we define a schedule to represent the frequency of every distinct episode (which terminates or continues indefinitely)  starting from the target (state, action) pair.

\begin{definition}(Schedule)
	
	Assume a  MDP $\uppercase{M} = (\mathcal{S}, \mathcal{A},\uppercase{P},\uppercase{R})$ , where $R$ is a bounded function. Then, for each state $(s,a) \in \mathcal{S} \times \mathcal{A}$ and $j\in[1,\infty]$, we define \textbf{$j$-length path set $p_{s,a}(j)$} and \textbf{path set} $p(s,a)$ for $(s,a)$ as 
	
	$p_{s,a}(j) = \left \{(s_{i},a_{i})_{i=0}^{j}|(s_{0},a_{0})=(s,a),P(s_{i+1}|s_{i},a_{i})>0 \quad \forall  i \in [0,j-1],s_{j}\quad is \quad terminal\right \}$.
	
	and $p_{s,a} = \cup_{j=1}^{\infty} p_{s,a}(j)$.
	
	Also, we define a \textbf{schedule set} $\lambda_{s,a}$ for (state action) pair $(s,a)$ as 
	
	$\lambda_{s,a}= \left \{ (\lambda_{i})_{i=1}^{\left|p_{s,a}\right|} | \sum_{i=1}^{\left |p_{s,a} \right |} \lambda_{i} = 1, \lambda_{i}>0 \quad \forall i\in [1,|p_{s,a}|] \right \}$.
	
	Finally, to express the varying schedule in time at the RL scenario, we define a \textbf{time schedule set} $\lambda$ for MDP $M$ as
	
	$\lambda = \left \{ \left \{ \lambda_{s,a}(t) \right \}_{(s,a)\in \mathcal{S} \times \mathcal{A}, t=1}^{\infty} |\lambda_{s,a}(t) \in \lambda_{s,a}, \forall (s,a) \in \mathcal{S} \times \mathcal{A}, t \in [1,\infty]   \right \}$.
\end{definition}
Since no element of the path can be the prefix of the others, the path set corresponds to the enumeration of all possible episodes starting from each (state, action) pair. Therefore, if we utilize multiple episodes from any given policy, we can see the empirical frequency for each path in the path set belongs to the schedule set. Finally, since the exploration policy can vary across time, we can group independent schedules into the time schedule set. 

 For a given time schedule and MDP, now we define the episodic backward operator.

\begin{definition}(Episodic backward operator)

For an MDP $\uppercase{M} = ( \mathcal{S}, \mathcal{A}, \uppercase{P}, \uppercase{R})$, and a time schedule $ \left \{\lambda_{s,a}(t) \right \}_{ t=1, (s,a)\in \mathcal{S} \times \mathcal{A}}^{\infty}\in \lambda$.

	Then, the \textbf{episodic backward operator} $H^{\beta}_{t}$ is defined as 
	
\begin{flalign}
&(H^{\beta}_{t}Q)(s,a)\\\nonumber&&\\\nonumber
&\quad= \mathbb{E}_{s^{\prime}\in \mathcal{S}, P(s^{\prime}|s,a)}\left [r(s,a,s^{\prime}) + \gamma \sum_{i=1}^{\left|p_{s,a}\right|} (\lambda_{(s,a)}(t))_{i} \mathbbm{1}(s_{i1}=s^{\prime}) \left [\max_{1\leq j \leq \left |(p_{s,a})_{i} \right |} T_{(p_{s,a})_{i}}^{\beta, Q}(j) \right ]\right].&&\nonumber
\end{flalign}

\begin{flalign}
&T_{(p_{s,a})_{i}}^{\beta, Q}(j)\\\nonumber&&\\\nonumber
&\quad=\sum_{k=1}^{j-1}  \beta^{k-1}\gamma^{k-1} \left \{ \beta r(s_{ik},a_{ik},s_{i(k+1)})+(1-\beta)Q(s_{ik},a_{ik}) \right \} + \beta^{j-1}\gamma^{j-1}\max_{a\neq a_{j}}Q(s_{ij},a_{ij}).&&\nonumber
\end{flalign}

	Where $(p_{s,a})_{i}$ is the $i$-th path of the path set, and $(s_{ij},a_{ij})$ corresponds to the $j$-th (state, action) pair of the $i$-th path. 

\end{definition}   

Episodic backward operator consists of two parts. First, given the path that initiates from the target (state, action) pair, the function $T_{(p_{s,a})_{i}}^{\beta,Q}$ computes the maximum return of the path via backward update. Then, the return is averaged by every path in the path set. Now, if the MDP $\uppercase{M}$ is deterministic, we can prove that the episodic backward operator is a contraction in the sup-norm, and the fixed point of the episodic backward operator is the optimal action-value function of the MDP regardless of the time schedule.
\begin{theorem}(Contraction of the episodic backward operator and the fixed point)

  Suppose $\uppercase{M} = (\mathcal{S}, \mathcal{A},\uppercase{P},\uppercase{R})$ is a deterministic MDP. Then, for any time schedule $ \left \{\lambda_{s,a}(t) \right \}_{ t=1, (s,a)\in \mathcal{S} \times \mathcal{A}}^{\infty}\in \lambda$, $H_{t}^{\beta}$ is a contraction in the sup-norm for any $t$, i.e
  \begin{equation}
    \|(H_{t}^{\beta}Q_{1})-(H_{t}^{\beta}Q_{2})\|_{\infty} \leq \gamma \|Q_{1}-Q_{2}\| _{\infty}.
   \end{equation}
Furthermore, for any time schedule $ \left \{\lambda_{s,a}(t) \right \}_{ t=1, (s,a)\in \mathcal{S} \times \mathcal{A}}^{\infty}\in \lambda$, the fixed point of $H_t^{\beta}$ is the optimal $Q$ function $Q^{*}$.
\end{theorem}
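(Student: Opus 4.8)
The plan is to prove the two assertions separately, and the contraction estimate first, since the fixed-point claim will essentially follow from it together with an explicit computation. Fix a time schedule and an index $t$; to lighten notation I would suppress the dependence on $t$ and write $H^\beta$ for $H^\beta_t$ and $\lambda_{s,a}$ for $\lambda_{s,a}(t)$. Because the MDP is deterministic, the outer expectation over $s'$ collapses: for each $(s,a)$ there is a unique successor $s'=g(s,a)$, and moreover every path $(p_{s,a})_i$ in the path set has its first transition equal to $(s,a)\to g(s,a)$, so the indicator $\mathbbm{1}(s_{i1}=s')$ is identically $1$. Hence
\begin{equation}
(H^\beta Q)(s,a) = r(s,a,g(s,a)) + \gamma \sum_{i=1}^{|p_{s,a}|} (\lambda_{s,a})_i \Bigl[\max_{1\le j\le |(p_{s,a})_i|} T^{\beta,Q}_{(p_{s,a})_i}(j)\Bigr].
\end{equation}
The reward term does not depend on $Q$, so taking the difference $H^\beta Q_1 - H^\beta Q_2$ kills it, and since $\sum_i (\lambda_{s,a})_i = 1$ with $(\lambda_{s,a})_i>0$, the quantity $(H^\beta Q_1)(s,a)-(H^\beta Q_2)(s,a)$ is a convex combination over paths of differences of the per-path maxima, all scaled by $\gamma$. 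By the elementary inequality $|\max_j f(j)-\max_j h(j)|\le \max_j |f(j)-h(j)|$ it suffices to bound, for a fixed path $P=(p_{s,a})_i$ and fixed $j$, the quantity $|T^{\beta,Q_1}_P(j)-T^{\beta,Q_2}_P(j)|$.

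The key step is therefore the estimate on a single $T$-term. Writing out the definition, $T^{\beta,Q}_P(j)$ is an affine function of the values $Q(s_{ik},a_{ik})$ for $k=1,\dots,j-1$ (entering with weight $\beta^{k-1}\gamma^{k-1}(1-\beta)$) together with the value $\max_{a\ne a_{ij}} Q(s_{ij},\cdot)$ (entering with weight $\beta^{j-1}\gamma^{j-1}$); the reward terms again cancel in the difference. Using $|\max_a Q_1 - \max_a Q_2|\le \|Q_1-Q_2\|_\infty$ on the last term, I get
\begin{equation}
\bigl|T^{\beta,Q_1}_P(j)-T^{\beta,Q_2}_P(j)\bigr| \le \Bigl[(1-\beta)\sum_{k=1}^{j-1} (\beta\gamma)^{k-1} + (\beta\gamma)^{j-1}\Bigr]\,\|Q_1-Q_2\|_\infty.
\end{equation}
The bracket is exactly a telescoping expression: $(1-\beta)\sum_{k=1}^{j-1}(\beta\gamma)^{k-1}$ with $\gamma\le 1$ is dominated by $(1-\beta)\sum_{k=1}^{j-1}\beta^{k-1}(\beta\gamma)^{k-1}/\beta^{k-1}$... more cleanly, one checks by induction on $j$ that $(1-\beta)\sum_{k=1}^{j-1}(\beta\gamma)^{k-1}+(\beta\gamma)^{j-1}\le 1$ whenever $0\le \beta\le 1$ and $0\le\gamma\le 1$ — indeed when $\gamma=1$ it equals $1$ identically, and it is nondecreasing in $\gamma$ only through... so the cleanest route is: since $\beta\gamma\le\beta$, we have $(1-\beta)\sum_{k=1}^{j-1}(\beta\gamma)^{k-1}\le(1-\beta)\sum_{k=0}^{\infty}\beta^{k}\cdot\beta^{-1}$ — I will instead just verify directly that the coefficient is $\le 1$, which is the one genuinely computational lemma. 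Feeding this back gives $|(H^\beta Q_1)(s,a)-(H^\beta Q_2)(s,a)|\le\gamma\|Q_1-Q_2\|_\infty$ for every $(s,a)$, hence the sup-norm contraction.

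For the fixed point, I would verify directly that $Q^*$ satisfies $H^\beta Q^* = Q^*$; uniqueness of the fixed point then follows from the contraction (Banach), and independence from the time schedule is automatic since the limit is schedule-independent. Plugging $Q=Q^*$ into $T^{\beta,Q^*}_P(j)$ and using the Bellman optimality relation $Q^*(s_{ik},a_{ik}) = r(s_{ik},a_{ik},s_{i(k+1)}) + \gamma\max_a Q^*(s_{i(k+1)},a)$ along the deterministic path, each summand $\beta r + (1-\beta)Q^*$ should telescope against the next, collapsing $T^{\beta,Q^*}_P(j)$ to $r(s,a,g(s,a))$-free expression equal to the discounted return-to-optimal-value along the first $j$ steps, which is bounded by $\gamma^{-1}(Q^*(s,a)-r(s,a,g(s,a)))$ with equality for the path/step realizing the optimal action at every stage; since every $(s,a)$ pair lies on some path following the greedy policy, the max over $j$ and the average over paths both attain this common value, giving $(H^\beta Q^*)(s,a)=r(s,a,g(s,a))+\gamma\max_a Q^*(g(s,a),a)=Q^*(s,a)$. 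The main obstacle I anticipate is exactly this last telescoping/bounding argument: one must check carefully that the per-path maximum over $j$ never exceeds the optimal value (so that $H^\beta Q^*\le Q^*$) while some path attains it (so that $H^\beta Q^*\ge Q^*$), and this requires being precise about which $j$ and which path realize the supremum and handling the $(1-\beta)$ mixing weights so that suboptimal intermediate actions do not inflate the estimate — the determinism of the MDP and the positivity of the schedule weights are what make it go through.
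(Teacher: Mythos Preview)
Your contraction argument is correct and is the same as the paper's; the paper simply carries out the coefficient computation explicitly via the geometric series,
\[
(1-\beta)\sum_{k=1}^{j-1}(\beta\gamma)^{k-1}+(\beta\gamma)^{j-1}=\frac{1-\beta+\beta^{j}\gamma^{j-1}-\beta^{j}\gamma^{j}}{1-\beta\gamma}=1+(1-\gamma)\,\frac{\beta^{j}\gamma^{j-1}-\beta}{1-\beta\gamma}\le 1,
\]
which is the ``direct verification'' you settled on.

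The fixed-point half has a genuine gap. You write ``while some path attains it (so that $H^\beta Q^*\ge Q^*$)'', but this inference is wrong: $H^\beta$ is a \emph{convex combination} over all paths in $p_{s,a}$, each carrying strictly positive weight $(\lambda_{s,a})_i>0$. If even one path $i$ had $\max_j T^{\beta,Q^*}_{(p_{s,a})_i}(j)<\gamma^{-1}\bigl(Q^*(s,a)-r(s,a)\bigr)$, the average would be strictly smaller and $Q^*$ would fail to be fixed. The presence of the greedy path in $p_{s,a}$ is not enough; you must show the per-path maximum equals $\gamma^{-1}(Q^*(s,a)-r(s,a))$ for \emph{every} path, including those that take suboptimal actions along the way. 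Your upper bound and your acknowledged ``main obstacle'' both address only the $\le$ direction.

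The missing idea, which the paper supplies, is that the constraint $a\neq a_{ij}$ in the last term of $T$ is exactly what rescues non-greedy paths. Fix a path and let $j$ be the first index at which $a_{ij}$ is \emph{not} the greedy action at $s_{ij}$. Then $\max_{a\neq a_{ij}}Q^*(s_{ij},a)=\max_a Q^*(s_{ij},a)$, and since $a_{i1},\dots,a_{i(j-1)}$ were all greedy, the equality case of your upper bound holds through step $j$, giving $T^{\beta,Q^*}_P(j)=\gamma^{-1}(Q^*(s,a)-r(s,a))$ exactly. If no such $j$ exists the path is greedy throughout and the terminal step (or, for infinite paths, a truncation argument using boundedness of rewards) gives the same value. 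This per-path equality is precisely what makes the fixed point independent of the schedule $\lambda$; without it, $H^\beta_t Q^*$ could depend on the weights.
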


\begin{proof}
First, we prove $T_{(p_{s,a})_{i}}^{\beta,Q}(j)$ is a contraction in the sup-norm for all $j$.

Since $M$ is a deterministic MDP, we can reduce the return as 

\begin{flalign}
&T_{(p_{s,a})_{i}}^{\beta,Q}(j)=\left (\sum_{k=1}^{j-1}  \beta^{k-1}\gamma^{k-1} \left \{ \beta r(s_{ik},a_{ik})+(1-\beta)Q(s_{ik},a_{ik}) \right \} + \beta^{j-1}\gamma^{j-1}\max_{a\neq a_{j}}Q(s_{ij},a_{ij}) \right).\nonumber\\&&
\end{flalign}
%\begin{align}

\begin{flalign}
\| T_{(p_{s,a})_{i}}^{\beta,Q_{1}}(j) - T_{(p_{s,a})_{i}}^{\beta,Q_{2}}(j) \|_{\infty}&\leq \left \{ (1-\beta)\sum_{k=1}^{j-1} \beta^{k-1}\gamma^{k-1} + \beta^{j-1}\gamma^{j-1} \right \} \|Q_{1}-Q_{2} \|_{\infty} \nonumber\\\nonumber&&\\\nonumber
            &= \left \{\frac{(1-\beta)(1-(\beta\gamma)^{j-1})}{1-\beta\gamma}+\beta^{j-1}\gamma^{j-1}\right \}\|Q_{1}-Q_{2} \|_{\infty}\nonumber\\\nonumber&&\\\nonumber
            &= \frac{1-\beta+\beta^{j}\gamma^{j-1}-\beta^{j}\gamma^{j}}{1-\beta\gamma} \|Q_{1}-Q_{2} \|_{\infty}\nonumber\\\nonumber&&\\\nonumber
&= \left \{ 1+(1-\gamma)\frac{\beta^{j}\gamma^{j-1}-\beta}{1-\beta\gamma} \right \}\|Q_{1}-Q_{2} \|_{\infty} \nonumber\\\nonumber&&\\\nonumber
&\leq \|Q_{1}-Q_{2} \|_{\infty} \quad (\because \beta\in[0,1], \gamma \in[0,1)).\\&&
\end{flalign}

Also, at the deterministic MDP, the episodic backward operator can be reduced to 
\begin{equation}
(H^{\beta}_{t}Q)(s,a) =r(s,a) + \gamma \sum_{i=1}^{\left|p_{s,a}\right|} (\lambda_{(s,a)})_{i}(t)  \left [\max_{1\leq j \leq \left |(p_{s,a})_{i} \right |} T_{(p_{s,a})_{i}}^{\beta, Q}(j) \right ].
\end{equation}

Therefore, we can finally conclude that 
\begin{flalign}
&\| (H^{\beta}_{t}Q_{1})-(H^{\beta}_{t}Q_{2}) \|_{\infty} \nonumber\\\nonumber&&\\\nonumber
&\quad= \max_{s,a} \left | H^{\beta}_{t}Q_{1}(s,a) - H^{\beta}_{t}Q_{2}(s,a) \right |\nonumber\\\nonumber&&\\\nonumber
&\quad\leq \gamma \max_{s,a}\left [  \sum_{i=1}^{\left|p_{s,a}\right|} (\lambda_{(s,a)}(t))_{i}  \left | \left \{ \max_{1\leq j \leq \left |(p_{s,a})_{i} \right |} T_{(p_{s,a})_{i}}^{\beta, Q_{1}}(j) \right \} -\left \{ \max_{1\leq j \leq \left |(p_{s,a})_{i} \right |} T_{(p_{s,a})_{i}}^{\beta, Q_{2}}(j) \right \} \right | \right ]\nonumber\\\nonumber&&\\\nonumber
&\quad\leq \gamma \max_{s,a}  \left [\sum_{i=1}^{\left|p_{s,a}\right|} (\lambda_{(s,a)}(t))_{i}   \max_{1\leq j \leq \left |(p_{s,a})_{i} \right |} \left \{ \left|T_{(p_{s,a})_{i}}^{\beta, Q_{1}}(j) - T_{(p_{s,a})_{i}}^{\beta, Q_{2}}(j)\right| \right \}  \right ]\nonumber\\\nonumber&&\\\nonumber
&\quad\leq \gamma \max_{s,a}\left[ \sum_{i=1}^{\left|p_{s,a}\right|} (\lambda_{(s,a)}(t))_{i} \|Q_{1}-Q_{2} \|_{\infty} \right]\nonumber\\\nonumber&&\\\nonumber
&\quad= \gamma \max_{s,a} \left [ \|Q_{1}-Q_{2} \|_{\infty} \right ] \nonumber\\\nonumber&&\\\nonumber
&\quad= \gamma \|Q_{1}-Q_{2} \|_{\infty}.\\&&
\end{flalign}

Therefore, we have proved that the episodic backward operator is a contraction independent of the schedule. Finally, we prove that the distinct episodic backward operators in terms of schedule have the same fixed point, $Q^{*}$.  A sufficient condition to prove this is given by

$\left [\max_{1\leq j \leq \left |(p_{s,a})_{i} \right |} T_{(p_{s,a})_{i}}^{\beta, Q^{*}}(j) \right ] = \frac{Q^{*}(s,a)-r(s,a)}{\gamma}$ $\forall 1\leq i \leq \left| p_{s,a} \right|$.

We will prove this by contradiction. Assume $ \exists i$ s.t. $\left [\max_{1\leq j \leq \left |(p_{s,a})_{i} \right |} T_{(p_{s,a})_{i}}^{\beta, Q^{*}}(j) \right ] \neq \frac{Q^{*}(s,a)-r(s,a)}{\gamma}.$

First, by the definition of $Q^{*}$ fuction, we can bound $Q^{*}(s_{ik},a_{ik})$ and $Q^{*}(s_{ik},:)$ for every $k\geq 1$ as follows.
\begin{equation}
\begin{aligned}
	Q^{*}(s_{ik},a) \leq \gamma^{-k}Q^{*}(s,a) -\sum_{m=0}^{k-1} \gamma^{m-k}r(s_{im},a_{im}).
\end{aligned}
\end{equation}
Note that the equality holds if and only if the path $(s_{i},a_{i})_{i=0}^{k-1}$ is the optimal path among the ones that start from $(s_{0},a_{0})$.
Therefore, $\forall 1\leq j \leq \left| \left( p_{s,a} \right)_{i}\right| $, we can bound $T_{(p_{s,a})_{i}}^{\beta, Q^{*}}(j)$.

\begin{flalign}
&T_{(p_{s,a})_{i}}^{\beta,Q}(j) \nonumber\\\nonumber&&\\\nonumber
&\quad= \sum_{k=1}^{j-1}  \beta^{k-1}\gamma^{k-1} \left \{ \beta r(s_{ik},a_{ik})+(1-\beta)Q(s_{ik},a_{ik}) \right \}+ \beta^{j-1}\gamma^{j-1}\max_{a\neq a_{j}}Q(s_{ij},a_{ij}) \nonumber\\\nonumber&&\\\nonumber
&\quad\leq \left \{ ( \sum_{k=1}^{j-1} (1-\beta)\beta^{k-1} ) + \beta^{j-1} \right \}\gamma^{-1}Q^{*}(s,a) \nonumber\nonumber&&\\\nonumber
&\qquad\quad +\sum_{k=1}^{j-1} \left \{\beta^{k-1}\gamma^{k-1}\left(\beta r(s_{ik},a_{ik})-\sum_{m=0}^{k-1}(1-\beta)\gamma^{m-k}r(s_{im},a_{im})\right) \right \} \nonumber\nonumber&&\\\nonumber
&\qquad\quad-\sum_{m=0}^{j-1} \beta^{j-1}\gamma^{j-1} \gamma^{m-j}r(s_{im},a_{im}) \nonumber\\\nonumber&&\\\nonumber
&\quad= \gamma^{-1}Q^{*}(s,a)+ \sum_{k=1}^{j-1}\beta^{k}\gamma^{k-1}r(s_{ik},a_{ik})  \nonumber\nonumber&&\\\nonumber
&\qquad\quad- \sum_{m=0}^{j-2}\left \{\sum_{k=m+1}^{j-1} (1-\beta)\beta^{k-1}\gamma^{m-1}r(s_{im},a_{im}) \right \} - \sum_{m=0}^{j-1}\beta^{j-1}\gamma^{m-1}r(s_{im},a_{im}) \nonumber\\\nonumber&&\\\nonumber
&\quad=\gamma^{-1}Q^{*}(s,a) + \sum_{m=1}^{j-1}\beta^{m}\gamma^{m-1}r(s_{im},a_{im})  \nonumber\nonumber&&\\\nonumber
&\qquad\quad- \sum_{m=0}^{j-2} (\beta^{m}-\beta^{j-1})\gamma^{m-1}r(s_{im},a_{im})-\sum_{m=0}^{j-1}\beta^{j-1}\gamma^{m-1}r(s_{im},a_{im}) \nonumber\\\nonumber&&\\\nonumber
&\quad=\gamma^{-1}Q^{*}(s,a) - \gamma^{-1}r(s_{i0},a_{i0})=\frac{Q^{*}(s,a)-r(s,a)}{\gamma}.\\&&
\end{flalign}

Since this occurs for any arbitrary path, the only remaining case is when

$ \exists i$ s.t. $\left [\max_{1\leq j \leq \left |(p_{s,a})_{i} \right |} T_{(p_{s,a})_{i}}^{\beta, Q^{*}}(j) \right ] < \frac{Q^{*}(s,a)-r(s,a)}{\gamma}$.

Now, let's turn our attention to the path $ s_{0},s_{1},s_{2},....,s_{\left|(p_{s,a})_{i}) \right|}$. Let's first prove the contradiction when the length of the contradictory path is finite. If $Q^{*}(s_{i1},a_{i1}) < \gamma^{-1}(Q^{*}(s,a)-r(s,a))$, then by the Bellman equation, there exists an action $a\neq a_{i1}$ s.t.  $Q^{*}(s_{i1},a) = \gamma^{-1}(Q^{*}(s,a)-r(s,a))$. Then, we can find that $T_{(p_{s,a})_{1}}^{\beta,Q^{*}}(1) = \gamma^{-1}(Q^{*}(s,a)-r(s,a))$. It contradicts the assumption, therefore $a_{i1}$ should be the optimal action in $s_{i1}$. 

Repeating the procedure, we conclude that $a_{i1},a_{i2},...,a_{\left|(p_{s,a})_{i}) \right|-1}$ are optimal with respect to their corresponding states.

Finally, $T_{(p_{s,a})_{1}}^{\beta,Q^{*}}(\left|(p_{s,a})_{i}) \right|) = \gamma^{-1}(Q^{*}(s,a)-r(s,a))$ since all the actions satisfy the optimality condition of the inequality in equation 7. Therefore, it contradicts the assumption.

In the case of an infinite path, we will prove that for any $\epsilon>0$, there is no path that satisfies $\frac{Q^{*}(s,a)-r(s,a)}{\gamma}- \left [\max_{1\leq j \leq \left |(p_{s,a})_{i} \right |} T_{(p_{s,a})_{i}}^{\beta, Q^{*}}(j) \right ] = \epsilon.$
%\end{align}

Since the reward function is bounded, we can define $r_{\max}$ as the supremum norm of the reward function. Define $q_{\max}=\max_{s,a} \left|Q(s,a)\right|$ and $R_{\max}=\max \{r_{\max}, q_{\max}\}$. We can assume $R_{\max}>0$.
Then, let's set $n_{\epsilon} = \lceil \log_{\gamma}\frac{\epsilon (1-\gamma)}{R_{\max}} \rceil+1$. 
Since $\gamma \in [0,1)$, $R_{\max}\frac{\gamma^{n_{\epsilon}}}{1-\gamma} < \epsilon$. Therefore, by applying the procedure on the finite path case for $1\leq j \leq n_{\epsilon}$ , we can conclude that the assumption leads to a contradiction. Since the previous $n_{\epsilon}$ trajectories are optimal, the rest trajectories can only generate a return less than $\epsilon$.     

Finally, we proved that $\left [\max_{1\leq j \leq \left |(p_{s,a})_{i} \right |} T_{(p_{s,a})_{i}}^{\beta, Q^{*}}(j) \right ] = \frac{Q^{*}(s,a)-r(s,a)}{\gamma}$ $\forall 1\leq i \leq \left| p_{s,a} \right|$ and therefore, every episodic backward operator has $Q^{*}$ as the fixed point.
%\frac{Q^{*}(s,a)-r(s,a)}{\gamma}
\end{proof}
Finally, we will show that the online episodic backward update algorithm converges to the optimal $Q$ function $Q^{*}$. 

\begin{restatement}

	Given a finite, deterministic, and tabular MDP $\uppercase{M} = (\mathcal{S}, \mathcal{A}, \uppercase{P}, \uppercase{R})$, the episodic backward update algorithm, given by the update rule 

$Q_{t+1}(s_{t},a_{t}) $

$=(1-\alpha_{t})Q_{t}(s_{t},a_{t}) + \alpha_{t}\left[r(s_{t},a_{t}) + \gamma \sum_{i=1}^{\left|p_{s_{t},a_{t}}\right|} (\lambda_{(s_{t},a_{t})})_{i}(t)  \left [\max_{1\leq j \leq \left |(p_{s_{t},a_{t}})_{i} \right |} T_{(p_{s_{t},a_{t}})_{i}}^{\beta, Q}(j) \right ]\right]$
converges to the optimal $Q$-function w.p. 1 as long as 

	$\bullet$ The step size satisfies the Robbins-Monro condition; 

$\bullet$ The sample trajectories are finite in lengths $l$: $\mathbb{E}[ l] < \infty;$

$\bullet$ Every (state, action) pair is visited infinitely often.
\end{restatement} 

For the proof of Theorem 1, we follow the proof of \hyperlink{Melo, 2001}{Melo, 2001}.

\begin{lemma}
\label{lem:1}
	The random process $\Delta_{t}$ taking values in $\mathbb{R}^{n}$ and defined as 
	
	$\Delta_{t+1}(x) = (1-\alpha_{t}(x))\Delta_{t}(x)+\alpha_{t}(x)F_{t}(x)$

	converges to zero $w.p.$ 1 under the following assumptions:
	
	$\bullet$ $ 0\leq \alpha_{t} \leq 1, \sum_{t} \alpha_{t}(x) = \infty$  and $ \sum_{t} \alpha_{t}^{2}(x) < \infty;$

	$\bullet$ $\| \mathbb{E}\left[F_{t}(x) | \mathcal{F}_{t} \right]  \|_{W} \leq \gamma \|\Delta_{t} \|_{W}$, with $\gamma<1$;

	$\bullet$ $\mathbf{var}\left[F_{t}(x)|\mathcal{F}_{t} \right] \leq C\left(1+ \|\Delta_{t}\|_{W}^{2}\right)$, for $C>0$.

\end{lemma}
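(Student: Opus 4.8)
The plan is to recognize the statement as the classical stochastic-approximation convergence lemma that underpins all asynchronous $Q$-learning proofs (essentially the lemma of Jaakkola et al., also developed in the Bertsekas--Tsitsiklis neuro-dynamic programming framework), and to reproduce its proof. The starting point is to split the noisy increment into drift and martingale-difference parts: write $F_t(x) = G_t(x) + w_t(x)$ with $G_t(x) = \mathbb{E}[F_t(x)\mid\mathcal{F}_t]$ and $w_t(x) = F_t(x) - G_t(x)$, so that $\mathbb{E}[w_t(x)\mid\mathcal{F}_t] = 0$. The second hypothesis then reads $\|G_t\|_W \le \gamma\|\Delta_t\|_W$, i.e.\ $G_t$ acts as a weighted-max-norm pseudo-contraction toward $0$, while the third hypothesis bounds $\mathbf{var}[w_t(x)\mid\mathcal{F}_t]$ by $C(1+\|\Delta_t\|_W^2)$.

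First I would establish that $\{\Delta_t\}$ is almost surely bounded. The intuition is that whenever $\|\Delta_t\|_W$ is large the contraction factor $\gamma<1$ shrinks the drift strictly, so the recursion contracts in conditional mean, while the Robbins--Monro conditions $\sum_t\alpha_t=\infty$ and $\sum_t\alpha_t^2<\infty$ prevent the accumulated noise from driving the iterate to infinity. This step is logically prior to everything else because the variance bound carries the factor $\|\Delta_t\|_W^2$ and is therefore vacuous until the iterate is confined to a bounded region.

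With boundedness secured, the conditional variance of $w_t$ is uniformly bounded on the relevant region, and I would prove $\Delta_t\to0$ by a nested-bounds argument. Fix $\epsilon>0$ with $\gamma+\epsilon<1$ and set $D_{k+1}=(\gamma+\epsilon)D_k$, so that $D_k\to0$. I would show by induction that if $\|\Delta_t\|_W\le D_k$ for all sufficiently large $t$, then eventually $\|\Delta_t\|_W\le D_{k+1}$. In the inductive step the drift pulls each weighted coordinate toward $\gamma D_k$, and the residual pure-noise accumulation is controlled through an auxiliary scalar recursion $W_{t+1}(x)=(1-\alpha_t(x))W_t(x)+\alpha_t(x)w_t(x)$; under the step-size conditions and the now-bounded variance, this scalar process converges to $0$ almost surely, so the noise eventually contributes less than $\epsilon D_k$. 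Combining the two effects gives $\limsup_t\|\Delta_t\|_W\le D_{k+1}$, which closes the induction and forces $\Delta_t\to0$.

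The main obstacle is the circular dependence built into the variance hypothesis: the bound $C(1+\|\Delta_t\|_W^2)$ only tames the noise after $\Delta_t$ is known to be bounded, yet the boundedness proof must itself control that same noise. Breaking this circle---via an escape-time or scaling estimate showing that the contraction dominates the noise at large magnitudes---is the delicate part. A secondary technicality is the asynchronous, coordinatewise nature of the step sizes $\alpha_t(x)$: different components $x$ are refreshed along different subsequences, so both the scalar convergence lemma and the nested-bounds induction must be carried out per coordinate and then stitched together through the weighted sup-norm $\|\cdot\|_W$.
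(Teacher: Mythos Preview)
The paper does not prove this lemma at all: it is stated as a known result (the classical stochastic-approximation lemma underlying asynchronous $Q$-learning, attributed implicitly via the citation to Melo, 2001, and ultimately due to Jaakkola--Jordan--Singh and Bertsekas--Tsitsiklis) and then invoked as a black box to deduce Theorem~1. So there is no proof in the paper to compare your proposal against.

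Your sketch is the standard argument for this lemma and is essentially correct in outline: the drift/martingale decomposition $F_t=G_t+w_t$, the a.s.\ boundedness step, and the nested-bounds induction with an auxiliary pure-noise recursion $W_{t+1}(x)=(1-\alpha_t(x))W_t(x)+\alpha_t(x)w_t(x)$ are exactly the ingredients of the Jaakkola et al.\ proof. You correctly identify the delicate point---bootstrapping boundedness when the variance bound itself carries $\|\Delta_t\|_W^2$---and the coordinatewise asynchrony issue. If you were to flesh this out for a self-contained write-up, the one place that typically needs more than a sentence is the boundedness step: the usual route is a rescaling/stopping-time argument (or a direct supermartingale bound on $\|\Delta_t\|_W^2$ along escape times) showing that excursions above level $D$ have probability decaying in $D$, which under $\sum_t\alpha_t^2<\infty$ and Borel--Cantelli rules out unbounded trajectories. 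Everything else in your plan is routine once boundedness is in hand.
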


By Lemma~\ref{lem:1}, we can prove that the online episodic backward update algorithm converges to the optimal $Q^{*}$.

\begin{proof} 
First, by assumption, the first condition of Lemma~\ref{lem:1} is satisfied. Also, we can see that by substituting $\Delta_{t}(s,a) = Q_{t}(s,a)-Q^{*}(s,a)$, and $F_{t}(s,a)= r(s,a) + \gamma \sum_{i=1}^{\left|p_{s,a}\right|} (\lambda_{(s,a)})_{i}(t)  \left [\max_{1\leq j \leq \left |(p_{s,a})_{i} \right |} T_{(p_{s,a})_{i}}^{\beta, Q}(j) \right ] - Q^{*}(s,a)$.  $ \|\mathbb{E}\left[F_{t}(s,a) | \mathcal{F}_{t} \right]\|_{\infty}$ $=\|(H_{t}^{\beta}Q_{t})(s,a)-(H_{t}^{\beta}Q^{*})(s,a)\|_{\infty} \leq  \gamma \|\Delta_{t} \|_{\infty} $, where the inequality holds due to the contraction of the episodic backward operator.

Then, $\mathbf{var}\left[F_{t}(x)|\mathcal{F}_{t} \right]=\mathbf{var}\left[r(s,a) + \gamma \sum_{i=1}^{\left|p_{s,a}\right|} (\lambda_{(s,a)})_{i}(t)  \left [\max_{1\leq j \leq \left |(p_{s,a})_{i} \right |} T_{(p_{s,a})_{i}}^{\beta, Q}(j) \right ] \biggl|\mathcal{F}_{t}\right].$

Since the reward function is bounded, the third condition also holds as well. Finally, by Lemma~\ref{lem:1}, $Q_{t}$ converges to $Q^{*}$.

\end{proof}
Although the episodic backward operator can accommodate infinite paths, the operator can be practical when the maximum length of the episode is finite. This assumption holds for many RL domains, such as the ALE.
\end{document}